



\documentclass[sigconf]{aamas} 


\usepackage{balance} 



\setcopyright{ifaamas}
\acmConference[AAMAS '22]{Proc.\@ of the 21st International Conference
on Autonomous Agents and Multiagent Systems (AAMAS 2022)}{May 9--13, 2022}
{Online}{P.~Faliszewski, V.~Mascardi, C.~Pelachaud,
M.E.~Taylor (eds.)}
\copyrightyear{2022}
\acmYear{2022}
\acmDOI{}
\acmPrice{}
\acmISBN{}



\acmSubmissionID{541}


\title{Unbiased Asymmetric Reinforcement Learning\\ under Partial Observability}


\author{Andrea Baisero}
\affiliation{
  \institution{Northeastern University}
  \city{Boston}
  \state{Massachusetts}
  \country{USA}}
\email{baisero.a@northeastern.edu}

\author{Christopher Amato}
\affiliation{
  \institution{Northeastern University}
  \city{Boston}
  \state{Massachusetts}
  \country{USA}}
\email{c.amato@northeastern.edu}


\begin{abstract}
  In partially observable reinforcement learning, offline training gives access to
  latent information which is not available during online training and/or
  execution, such as the system state.  Asymmetric actor-critic methods exploit
  such information by training a history-based policy via a state-based critic.
  However, many asymmetric methods lack theoretical foundation, and are only
  evaluated on limited domains.
  We examine the theory of asymmetric actor-critic methods which use
  state-based critics, and expose fundamental issues which undermine the
  validity of a common variant, and limit its ability to address partial
  observability.  We propose an unbiased asymmetric actor-critic variant which
  is able to exploit state information while remaining theoretically sound,
  maintaining the validity of the policy gradient theorem, and introducing no
  bias and relatively low variance into the training process.
  An empirical evaluation performed on domains which exhibit significant
  partial observability confirms our analysis, demonstrating that unbiased
  asymmetric actor-critic converges to better policies and/or faster than
  symmetric and biased asymmetric baselines.
\end{abstract}



\keywords{Reinforcement Learning; Partial Observability; Actor-Critic}


         
\usepackage{amsthm}
\newtheorem{theorem}{Theorem}[section]

\usepackage{algorithm}
\usepackage[noend]{algorithmic}

\usepackage[all]{abaisero}
\usepackage{tikz}
\usetikzlibrary{calc}
\usetikzlibrary{matrix}
\usetikzlibrary{backgrounds}
\usetikzlibrary{shapes}
\usetikzlibrary{positioning}
\tikzstyle{rv.empty}=[circle]
\tikzstyle{rv}=[rv.empty,draw]
\tikzstyle{rv.fake}=[rv,dashed]
\tikzstyle{ov}=[rv,fill=lightgray!50]
\tikzstyle{ov.fake}=[ov,dashed]
\tikzstyle{arrow}=[->,thick]

\usepackage{subcaption}
\usepackage{cleveref}
\usepackage[inline]{enumitem}

\usepackage{todonotes}
         
\newcommand\xset{\mathcal{X}}
\newcommand\defeq{=}

\newcommand\ploss{\loss_\text{policy}}
\newcommand\closs{\loss_\text{critic}}
\newcommand\hloss{\loss_\text{neg-entropy}}
\newcommand\pparams{\theta}
\newcommand\cparams{\vartheta}

\newcommand\ach{\textbf{A2C}}
\newcommand\acs{\textbf{A2C-asym-s}}
\newcommand\achs{\textbf{A2C-asym-hs}}
\newcommand\achrtwo{\textbf{A2C-react-2}}
\newcommand\achrfour{\textbf{A2C-react-4}}

\newcommand\env[1]{\textbf{#1}}
\newcommand\heavenhell{\env{Heaven-Hell}}
\newcommand\heavenhellthree{\env{Heaven-Hell-3}}
\newcommand\heavenhellfour{\env{Heaven-Hell-4}}
\newcommand\shopping{\env{Shopping}}
\newcommand\shoppingfive{\env{Shopping-5}}
\newcommand\shoppingsix{\env{Shopping-6}}

\newcommand\memoryroomsseven{\env{Memory-Four-Rooms-7x7}}
\newcommand\memoryroomsnine{\env{Memory-Four-Rooms-9x9}}
\newcommand\carflag{\env{Car-Flag}}
\newcommand\cleaner{\env{Cleaner}}

\newcommand\good{\texttt{GOOD}}
\newcommand\bad{\texttt{BAD}}
\newcommand\GGG{\texttt{G}}
\newcommand\B{\texttt{B}}


\begin{document}


\pagestyle{fancy}
\fancyhead{}


\maketitle 


\section{Introduction}\label{sec:intro}

\begin{figure}[t!]
  \centering
  \hfill
  \begin{subfigure}{.6\linewidth}
    \centering
    \fbox{\includegraphics[width=\linewidth]{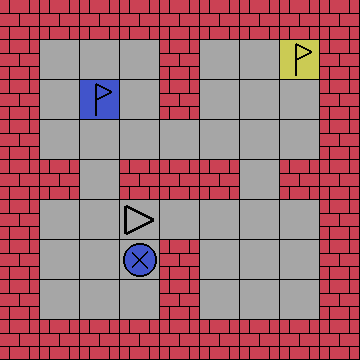}}
    \caption{State.}
  \end{subfigure}
  \hfill
  \begin{subfigure}{.3\linewidth}
    \centering
    \fbox{\includegraphics[width=\linewidth]{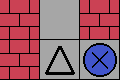}}
    \caption{Observation.}
  \end{subfigure}
  \hfill
  \caption{\memoryroomsnine, a procedurally generated navigation task which
  requires information-gathering and memorization.  The agent must avoid the \emph{bad} exit and reach the
\emph{good} exit, which is identifiable by the color of the \emph{beacon}.}\label{fig:frontpage}
\end{figure}

Partial observability is a key characteristic of many real-world reinforcement
learning (RL) control problems where the agent lacks access to the system
state, and is restricted to operate based on the observable past, a.k.a.\ the
\emph{history}.  Such control problems are commonly encoded as partially
observable Markov decision processes (POMDPs)~\cite{kaelbling_planning_1998},
which are the focus of a significant amount of research effort.
\emph{Offline learning}/\emph{online execution} is a common RL framework where
an agent is trained in a simulated \emph{offline} environment before operating
\emph{online}, which offers the possibility of using latent information not
generally available in online learning, e.g., the simulated system state, or
the state belief from the agent's
perspective~\cite{pinto_asymmetric_2017,karkus_particle_2018,jonschkowski_differentiable_2018,nguyen_belief-grounded_2020,warrington_robust_2021,chen_learning_2020}.

Offline learning methods are in principle able to exploit this privileged
information during training to achieve better online performance, so long as
the resulting agent does not use the latent information during online
execution.
Specifically, actor-critic
methods~\cite{sutton_policy_2000,konda_actor-critic_2000} are able to adopt
this approach via \emph{critic asymmetry}, where the policy and critic models
receive different
information~\cite{pinto_asymmetric_2017,foerster_counterfactual_2018,lowe_multi-agent_2017,li_robust_2019,wang_r-maddpg_2020,yang_cm3_2018,xiao_local_2021},
e.g., the history and latent state, respectively.
This is possible because the critic is merely a training construct, and is
not required or used by the agent to operate online.
By the very nature of actor-critic methods, critic models which are unable or
slow to learn accurate values act as a performance bottleneck on the policy.
Consequently, critic asymmetry is a powerful tool which, if carried out with
rigor, may provide significant benefits and bootstrap the agent's learning
performance.

Unfortunately, existing asymmetric methods use asymmetric information
heuristically, and demonstrate their validity only via empirical
experimentation on selected environments~\cite{pinto_asymmetric_2017,
foerster_counterfactual_2018,lowe_multi-agent_2017,li_robust_2019,wang_r-maddpg_2020,yang_cm3_2018,rashid_qmix_2018,mahajan_maven_2019,rashid_weighted_2020,nguyen_belief-grounded_2020,xiao_local_2021};
the lack of a sound theoretical foundation leaves uncertainties on whether
these methods are truly able to generalize to other environments, particularly
those wich feature higher degrees of partial observability (see
\Cref{fig:frontpage}).
In this work,
\begin{enumerate*}[label=(\alph*)]
  \item we analyze a standard variant of asymmetric actor-critic and expose
    analytical issues associated with the use of a state critic, namely that
    the state value function is generally ill-defined and/or causes learning
    bias;
  \item we prove an \emph{asymmetric policy gradient theorem} for partially
    observable control, an extension of the policy gradient theorem which
    explicitly uses latent state information;
  \item we propose a novel \emph{unbiased} asymmetric actor-critic method,
    which lacks the analytical issues of its \emph{biased} counterparts and is,
    to the best of our knowledge, the first of its kind to be theoretically
    sound;
  \item we validate our theoretical findings through empirical evaluations on
    environments which feature significant amounts of partial observability,
    and demonstrate the advantages of our unbiased variant over the symmetric and
    biased asymmetric baselines.
\end{enumerate*}

This work sets the stage for other asymmetric critic-based policy gradient
methods to exploit asymmetry in a principled manner, while learning under
partial observability.  Although we focus on \emph{advantage actor-critic}
(A2C), our method is easily extended to other critic-based learning methods
such as \emph{off-policy actor-critic}
\cite{degris_off-policy_2012,wang_sample_2017}, \emph{(deep) deterministic
policy gradient}~\cite{silver_deterministic_2014,lillicrap_continuous_2015},
and \emph{asynchronous actor-critic}~\cite{mnih_asynchronous_2016}.
Offline training is also the dominant paradigm in multi-agent RL, where many
asymmetric actor-critic methods could be similarly
improved~\cite{foerster_counterfactual_2018,lowe_multi-agent_2017,li_robust_2019,wang_r-maddpg_2020,yang_cm3_2018,rashid_qmix_2018,mahajan_maven_2019,rashid_weighted_2020,xiao_local_2021}.

\section{Related Work}

The use of latent information during offline training has been successfully
adopted in a variety of policy-based
methods~\cite{pinto_asymmetric_2017,foerster_counterfactual_2018,lowe_multi-agent_2017,yang_cm3_2018,li_robust_2019,wang_r-maddpg_2020,de_witt_deep_2021,warrington_robust_2021,xiao_local_2021}
and value-based
methods~\cite{rashid_qmix_2018,mahajan_maven_2019,rashid_weighted_2020,de_witt_deep_2021}.
Among the single-agent methods, \emph{asymmetric actor-critic for robot
learning}~\cite{pinto_asymmetric_2017} uses a reactive variant of DDPG with a
state-based critic to help address partial observability; belief-grounded
networks~\cite{nguyen_belief-grounded_2020} use a belief-reconstruction
auxiliary task to train history representations; and
Warrington et al.~\cite{warrington_robust_2021} and Chen et
al.~\cite{chen_learning_2020} use a fully observable agent trained offline on
latent state information to train a partially observable agent via imitation.

Asymmetric learning has also become popular in the multi-agent setting:
COMA~\cite{foerster_counterfactual_2018} uses reactive control and a shared
asymmetric critic which can receive either the joint observations of all agents
or the system state to solve cooperative tasks;
MADDPG~\cite{lowe_multi-agent_2017} and M3DDPG~\cite{li_robust_2019} use the
same form of asymmetry with individual asymmetric critics to solve
cooperative-competitive tasks;
R-MADDPG~\cite{wang_r-maddpg_2020} uses recurrent models to represent
non-reactive control, and the centralized critic uses the entire histories of
all agents;
CM3~\cite{yang_cm3_2018} uses a state critic for reactive control;
while ROLA~\cite{xiao_local_2021} trains centralized and local history/state
critics to estimate individual advantage values.
Asymmetry is also used in multi-agent value-based methods:
QMIX~\cite{rashid_qmix_2018}, MAVEN~\cite{mahajan_maven_2019}, and
WQMIX~\cite{rashid_weighted_2020} all train individual Q-models using a
centralized but factored Q-model, itself trained using state, joint histories,
and joint actions.

\section{Background}

In this section, we review background topics relevant to understand our work,
i.e., POMDPs, the RL graphical model, standard (symmetric) actor-critic, and
asymmetric actor-critic.

\paragraph{Notation}
We denote sets with calligraphy $\xset$, set elements with lowercase
$x\in\xset$, random variables (RVs) with uppercase $X$, and the set of
distributions over set $\xset$ as $\Delta\xset$.  Occasionally, we will need
absolute and/or relative time indices;  We use subscript $x_t$ to indicate
absolute time, and superscript $x\iter{k}$ to indicate the relative time of
variables, e.g., $x\iter{0}$ marks the beginning of a sequence happening at an
undetermined absolute time, and $x\iter{k}$ is the variable $k$ steps later.
We also use the bar notation to represent a sequence of superscripted variables
$\bar x = (x\iter{0}, x\iter{1}, x\iter{2}, \ldots)$.

\subsection{POMDPs}

A POMDP~\cite{kaelbling_planning_1998} is a discrete-time partially observable
control problem determined by a tuple $\langle \sset, \aset, \oset, T, O, R, \gamma \rangle$ consisting of: state, action and observation spaces
$\sset$, $\aset$, and $\oset$; transition function
$T\colon\sset\times\aset\to\Delta\sset$; observation function
$O\colon\sset\times\aset\times\sset\to\Delta\oset$; reward function
$R\colon\sset\times\aset\to\realset$; and discount factor $\gamma\in\left[0,
1\right]$.  The control goal is that of maximizing the expected discounted sum
of rewards $\Exp\left[ \sum_t \gamma^t R(S_t, A_t) \right]$, a.k.a.\ the
\emph{expected return}.

In the partially observable setting, the agent lacks access to the underlying
state, and actions are selected based on the observable \emph{history} $h$,
i.e., the sequences of past actions and observations.  We denote the space of
\emph{realizable}\footnote{Realizable histories and/or states have a non-zero
probability.} histories as $\hset\subseteq \left(\aset\times\oset\right)^*$,
and the space of \emph{realizable} histories of length $l$ as $\hset_l
\subseteq \left(\aset\times\oset\right)^l$.  Generally, an agent operating
under partial observability might have to consider the entire history to
achieve optimal behavior~\cite{singh_learning_1994}, i.e., its policy should
represent a mapping $\pi\colon\hset\to\Delta\aset$.  The \emph{belief-state}
$b\colon \hset\to\Delta\sset$ is the conditional distribution over states given
the observable history, i.e., $b(h) = \Pr(S\mid h)$, and a sufficient statistic
of the history for optimal control~\cite{kaelbling_planning_1998}.  We define
the history reward function as $\rfn(h, a) \defeq \Exp_{s\mid h}\left[ \rfn(s,
a) \right]$; from the agent's perspective, this is the reward function of the
decision process.  We denote the last observation in a history $h$ as $o_h$,
and say that an agent is \emph{reactive} if its policy
$\pi\colon\oset\to\Delta\aset$ only uses $o_h$ rather than the entire history.
A policy's history value function $\vpolicy\colon\hset\to\realset$ is the
expected return following a realizable history $h$,
\begin{equation}
  \vpolicy(h\iter{0}) = \Exp_{\bar s, \bar a\mid h\iter{0}}\left[
  \sum_{k=0}^\infty \gamma^k R(s\iter{k}, a\iter{k}) \right] \,,
  \label{eq:vh-raw}
\end{equation}
\noindent which supports an indirect recursive Bellman form,
\begin{align}
  \vpolicy(h) &= \sum_{a\in\aset} \pi(a; h) \qpolicy(h, a) \,, \label{eq:vh} \\
  \qpolicy(h, a) &= \rfn(h, a) + \gamma \Exp_{o\mid h, a}\left[ \vpolicy(hao)
  \right] \,. \label{eq:qha}
\end{align}

\subsection{The RL Graphical Model}\label{sec:graphmodel}

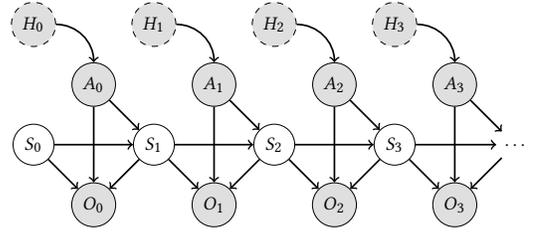
\begin{figure}
  \centering
  \scalebox{0.8}{
    \begin{tikzpicture}

  \node[rv] (S0) at (0, 0) {$S_0$};
  \node[rv] (S1) at (2, 0) {$S_1$};
  \node[rv] (S2) at (4, 0) {$S_2$};
  \node[rv] (S3) at (6, 0) {$S_3$};
  \node[rv.empty]     (S4) at (8, 0) {$\ldots$};

  \node[ov] (A0) at (1, 1) {$A_0$};
  \node[ov] (A1) at (3, 1) {$A_1$};
  \node[ov] (A2) at (5, 1) {$A_2$};
  \node[ov] (A3) at (7, 1) {$A_3$};

  \node[ov] (O0) at (1, -1) {$O_0$};
  \node[ov] (O1) at (3, -1) {$O_1$};
  \node[ov] (O2) at (5, -1) {$O_2$};
  \node[ov] (O3) at (7, -1) {$O_3$};

  \draw[arrow] (S0) -- (S1);
  \draw[arrow] (S1) -- (S2);
  \draw[arrow] (S2) -- (S3);
  \draw[arrow] (S3) -- (S4);

  \draw[arrow] (S0) -- (O0);
  \draw[arrow] (S1) -- (O1);
  \draw[arrow] (S2) -- (O2);
  \draw[arrow] (S3) -- (O3);

  \draw[arrow] (S1) -- (O0);
  \draw[arrow] (S2) -- (O1);
  \draw[arrow] (S3) -- (O2);
  \draw[arrow] (S4) -- (O3);

  \draw[arrow] (A0) -- (S1);
  \draw[arrow] (A1) -- (S2);
  \draw[arrow] (A2) -- (S3);
  \draw[arrow] (A3) -- (S4);

  \draw[arrow] (A0) -- (O0);
  \draw[arrow] (A1) -- (O1);
  \draw[arrow] (A2) -- (O2);
  \draw[arrow] (A3) -- (O3);

  \node[ov.fake] (H0) at (0, 2) {$H_0$};
  \node[ov.fake] (H1) at (2, 2) {$H_1$};
  \node[ov.fake] (H2) at (4, 2) {$H_2$};
  \node[ov.fake] (H3) at (6, 2) {$H_3$};

  \draw[arrow] (H0) to[out=0,in=90] (A0);
  \draw[arrow] (H1) to[out=0,in=90] (A1);
  \draw[arrow] (H2) to[out=0,in=90] (A2);
  \draw[arrow] (H3) to[out=0,in=90] (A3);

\end{tikzpicture}
  }
  \caption{The graphical model induced by the environment dynamics and agent
  policy.  RVs are shown as solid nodes, observed RVs in gray, and latent RVs
in white.  The history RVs, shown as dashed nodes, are aggregates of other RVs,
i.e., the previous actions and observations.}\label{fig:graphmodel}
\end{figure}

Some of the theory and results developed in this document concerns whether
certain RVs of interest are well-defined; therefore, we review the RVs defined
by POMDPs.
The environment dynamics and the agent policy jointly induce a graphical model
(see \Cref{fig:graphmodel}) over \emph{timed} RVs $S_t$, $A_t$, and $O_t$.
Note that only \emph{timed} RVs are defined directly, and there are no
intrinsically \emph{time-less} RVs.  Any other RV must be defined in terms of
the available ones, e.g. we can define a joint RV for \emph{timed} histories
$H_t \defeq (A_0, O_0, \ldots, A_{t-1}, O_{t-1})$.  Sometimes it is possible to
define a \emph{limiting} (stationary) state RV $S \defeq \lim_{t\to\infty}
S_t$, however it is never possible to define a limiting (stationary) history RV
$H$, since the sample space of each \emph{timed} RV $H_t$ is different, and
$\lim_{t\to\infty} H_t$ does not exist.  In essence, $H_t$ is inherently timed.

A probability is a numeric value associated with the assignment of a value $x$
from a sample space $\xset$ to an RV $X$, e.g., $\Pr(X=x)$.  Although it is
common to use simplified notation to informally omit the RV assignment (e.g.,
$\Pr(x)$), it must always be implicitly clear which RV ($X$) is involved in the
assignment.
In the reinforcement learning graphical model, a probability is well-defined if
and only if
\begin{enumerate*}[label=(\alph*)]
  \item it is grounded (implicitly or explicitly) to \emph{timed} RVs (or
    functions thereof); or
  \item it is time-invariant (i.e., it can be impicitly grounded to any time
    index).
\end{enumerate*}
For example, $\Pr(s'\mid s, a)$ is implicitly grounded to the RVs of a state
transition $\Pr(S_{t+1}=s'\mid S_t=s, A_t=a)$, and although the time-index $t$
is not clear from context, the probability is time-invariant and thus well
defined.
As another example, $\Pr(s\mid h)$ is implicitly grounded to the RVs of a
belief $\Pr(S_t=s\mid H_t=h)$, where the time-index $t$ is implicitly grounded
to the history length $t=|h|$, which makes the probability well defined.

\subsection{(Symmetric) Actor-Critic for POMDPs}\label{sec:a2c}

Policy gradient methods~\cite{sutton_policy_2000} for fully observable control
can be adapted to partial observable control by replacing occurrences of the
system state $s$ with the history $h$ (which is the Markov-state of an
equivalent \emph{history}-MDP).
In advantage actor-critic methods (A2C)~\cite{konda_actor-critic_2000}, a
policy model $\policy\colon \hset\to\Delta\aset$ parameterized by $\pparams$ is
trained using gradients estimated from sample data, while a critic model
$\vmodel\colon \hset\to\realset$ parameterized by $\cparams$ is trained to
predict history values $\vpolicy(h)$.  Note that we annotate parametric critic
models with a hat $\vmodel$, to distinguish them from their analytical
counterparts $\vpolicy$.  In A2C, the critic is used to bootstrap return
estimates and as a baseline, both of which are techniques for the reduction of
estimation variance~\cite{greensmith_variance_2004}.
The actor and critic models are respectively trained on $\ploss(\pparams) +
\lambda \hloss(\pparams)$ and $\closs(\cparams)$.

\paragraph{Policy Loss}
The \emph{policy loss} $\ploss(\pparams) \defeq -\Exp\left[ \sum_{t=0}^\infty
\gamma^t \rfn(s_t, a_t) \right]$ encodes the agent's performance as the
expected return.  The policy gradient
theorem~\cite{sutton_policy_2000,konda_actor-critic_2000} provides an
analytical expression for the policy loss gradient w.r.t.\ the policy
parameters,
\begin{equation}
  \nabla_\pparams \ploss(\pparams) = -\Exp\left[ \sum_{t=0}^\infty \gamma^t
  \qpolicy(h_t, a_t) \nabla_\pparams \log\policy(a_t; h_t) \right] \,.
  \label{eq:policy-gradient}
\end{equation}

Value $\qpolicy(h_t, a_t)$ is replaced by the \emph{temporal difference (TD)
error} $\delta_t$ to reduce variance (at the cost of introducing modeling
bias),
\begin{align}
  \nabla_\pparams \ploss(\pparams) &= -\Exp\left[ \sum_{t=0}^\infty \gamma^t
  \delta_t \nabla_\pparams \log\policy(a_t; h_t) \right] \,, \\
  \delta_t &\defeq \rfn(s_t, a_t) + \gamma \vmodel(h_{t+1}) - \vmodel(h_t) \,.
  \label{eq:tderror:h}
\end{align}

\paragraph{Critic Loss}
The \emph{critic loss} $\closs(\cparams) \defeq \Exp\left[ \sum_{t=0}^\infty
\delta_t^2 \right]$ is used to minimize the total TD error, the gradient of
which should propagate through $\vmodel(h_t)$, but not through the
bootstrapping $\vmodel(h_{t+1})$.

\paragraph{Negative-Entropy Loss}
Finally, the \emph{negative-entropy loss} is commonly used, $\hloss(\pparams)
\defeq - \Exp\left[ \sum_t \Ent\left[ \pi(A_t; h_t) \right] \right]$, in
combination with a decaying weight $\lambda$, to avoid premature convergence of
the policy model and to promote exploration~\cite{williams_function_1991}.

\subsection{Asymmetric Actor-Critic for POMDPs}\label{sec:bg:aa2c}

While asymmetric actor-critic can be understood to be an entire family of
methods which use critic asymmetry, for the remainder of this document we will
be specifically referring to a \emph{non-reactive} and \emph{non-deterministic}
variant of the work by Pinto et al.~\cite{pinto_asymmetric_2017}, which uses
critic asymmetry to address image-based robot learning.
Their work uses a reactive variant of \emph{deep deterministic policy gradient}
(DDPG)~\cite{lillicrap_continuous_2015} trained in simulation, and replaces
the reactive observation critic $\vmodel(o)$ with a state critic $\vmodel(s)$;
the variant we will be analyzing applies the same critic substitution to A2C.
In practice, this state-based asymmetry is obtained by replacing the TD error
of \Cref{eq:tderror:h} (used in both the policy and critic losses) with
\begin{equation}
  \delta_t = \rfn(s_t, a_t) + \gamma \vmodel(s_{t+1}) - \vmodel(s_t) \,.
  \label{eq:tderror:s}
\end{equation}

Although~\cite{pinto_asymmetric_2017} claim that their work
addresses partial observability, their evaluation is based on reactive
environments which are effectively fully observable;  while the agent only
receives a single image, each image provides a virtually \emph{complete} and
\emph{occlusion-free} view of the entire workspace.  In practice, the images
are merely high-dimensional representations of a compact state.

\section{Theory of Asymmetric Actor-Critic}\label{sec:aa2c}

In this section, we analyze the theoretical implications of using a state
critic under partial observability, as described in \Cref{sec:bg:aa2c}, and
expose critical underlying issues.
The primary result will be that the time-invariant state value function
$\vpolicy(s)$ of a non-reactive agent is generally ill-defined.
Then, we show that the time-invariant state value function $\vpolicy(s)$ of a
reactive agent is well-defined under mild assumptions, but generally introduces
a bias into the training process which may undermine learning.
Finally, we show that the time-invariant state value function $\vpolicy(s)$ of
a reactive agent under stronger assumptions can be both well-defined and
unbiased.
Later, in \Cref{sec:uaa2c}, we provide a more general alternative which
guarantees well-defined and unbiased time-invariant state-based value functions
for arbitrary policies and control problems.

Informally, the issue with $\vpolicy(s)$ is that the state alone does not
contain sufficient information to determine the agent's future behavior---which
generally depends on the history---and is thus unable to accurately represent
expected future returns.  Ironically, state values suffer from a form of
\emph{history aliasing}, i.e., being unable to infer the agent's history from
the system's state.  This is particularly evident in control problems which
require the agent to perform forms of information gathering (a common
occurrence in partially observable control) which are not reflected in the
system state, e.g., reach a certain spot to observe a piece of information
which is necessary to determine future optimal behavior and solve the control
task.  In such cases, the state alone does not generally indicate whether the
agent has collected the necessary information in the past or not, and is
therefore unable to represent adequately whether the current state is a
positive or negative occurrence.  Formally, we will show that $\vpolicy(s)$ is
generally not a well-defined quantity and, even in special cases where it is
well-defined, generally introduces a bias in the learning process caused by the
imperfect correlation between histories and states;  in essence, the average
value of histories inferred from the current state is not an accurate estimate
of the current history's value.

\paragraph{Methodology}
We note that replacing the history critic is intrinsically questionable: the
policy gradient theorem for POMDPs (\Cref{eq:policy-gradient}) specifically
requires history values, and replacing them with other state-based values will
generally result in biased gradients and a general loss of theoretical
guarantees.  Therefore, we analyze state values $\vpolicy(s)$ as stochastic
estimators of history values $\vpolicy(h)$ and consider the corresponding
estimation bias, i.e., the difference between the expected estimate
$\Exp_{s\mid h}\left[ \vpolicy(s) \right]$ and the ground truth estimation
target $\vpolicy(h)$ for any given history $h$.

\subsection{General Policy under Partial Observability}\label{sec:general}

A policy's state value function $\vpolicy\colon\sset\to\realset$ is
\emph{tentatively} defined as the expected return following a realizable state
$s$,
\begin{equation}
  \vpolicy(s\iter{0}) = \Exp_{\bar s, \bar a \mid s\iter{0}}\left[
  \sum_{k=0}^\infty \gamma^k R(s\iter{k}, a\iter{k}) \right] \,,
  \label{eq:vs-raw}
\end{equation}
\noindent which, if well-defined, supports an indirect recursive Bellman form,
\begin{align}
  \vpolicy(s) &= \sum_{a\in\aset} \Pr(a\mid s) \qpolicy(s, a) \,, \label{eq:vs}
  \\
  \qpolicy(s, a) &= R(s, a) + \gamma \Exp_{s'\mid s, a}\left[ \vpolicy(s')
  \right] \,. \label{eq:qsa}
\end{align}

In \Cref{eq:vs}, we note the term $\Pr(a\mid s)$, which encodes the likelihood
of an action being taken from a given state.  Because the agent policy depends
on histories (not states), this term is not directly available, but must be
derived indirectly by integrating over possible histories.  Further, because
$s$ is timeless, and no additional context is available to narrow down time,
there is no choice but to integrate over histories of all possible lengths.
\begin{equation}
  \Pr(a\mid s) = \sum_{h\in\hset} \Pr(h\mid s) \pi(a; h) \,. \label{eq:pras}
\end{equation}

\Cref{eq:pras} reveals the probability term $\Pr(h\mid s)$, which encodes the
likelihood of a history having taken place in the past given a current state.
While $\Pr(h\mid s)$ may look harmless, it is the underlying cause of serious
analytical issues.  As discussed in \Cref{sec:graphmodel}, a probability is
only well-defined if associated with well-defined RVs, and unfortunately such
RVs do not exist for $\Pr(h\mid s)$.
On one hand, timed RVs $\Pr(H_t=h\mid S_t=s)$ cannot be used, because
\Cref{eq:pras} integrates over the sample space of all histories, and not just
those of a given length $t$.
On the other hand, time-less RVs $\Pr(H=h\mid S=s)$ cannot be used, because
such time-less RVs do not exist in the RL graphical model.  
Ultimately, $\Pr(h\mid s)$ is mathematically ill-defined, which consequently
causes both $\Pr(a\mid s)$ and $\vpolicy(s)$ to be ill-defined as well.

\begin{theorem}\label{thm:vs}
  In partially observable control problems, a time-invariant state value
  function $\vpolicy(s)$ is generally ill-defined.
\end{theorem}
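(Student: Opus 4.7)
The plan is to formalize the chain of dependencies sketched informally before the theorem and close it with the well-definedness criterion from \Cref{sec:graphmodel}. Starting from the tentative definition in \Cref{eq:vs-raw}, I would first note that evaluating $\Exp_{\bar s,\bar a\mid s\iter{0}}[\cdot]$ requires the conditional action distribution $\Pr(a\iter{0}\mid s\iter{0})$, since the stochastic policy depends on the (latent) history via the graphical model of \Cref{fig:graphmodel}. Expanding through the Bellman form in \Cref{eq:vs,eq:qsa} and invoking \Cref{eq:pras} reduces well-definedness of $\vpolicy(s)$ to well-definedness of the single probability $\Pr(h\mid s)$: if the latter fails, then $\Pr(a\mid s)$, $\qpolicy(s,a)$, and $\vpolicy(s)$ all inherit the failure.

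Next I would apply the two-route well-definedness criterion to $\Pr(h\mid s)$. Route (a), grounding to timed RVs, fails because \Cref{eq:pras} marginalizes over histories of every length simultaneously: any single fixed time index $t$ would force $\Pr(H_t=h\mid S_t=s)=0$ for every $h$ with $|h|\neq t$, collapsing the marginalization to a single length; while attaching the natural per-term index $t=|h|$ combines conditional distributions $\Pr(H_l=h\mid S_l=s)$ taken at different times $l$, which does not correspond to a single probability distribution over $\hset$. Route (b), time-invariance, fails because the sample spaces of the $H_t$ differ across $t$, so there is no limiting RV $H$ and no canonical time-less conditional $\Pr(H=h\mid S=s)$ can be defined in the RL graphical model.

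The main obstacle, and the reason the statement says \emph{generally} rather than always, is to rule out the degenerate possibility that the timed conditionals $\{\Pr(H_t=h\mid S_t=s)\}_t$ happen to coincide across $t$, in which case one might argue by convention that $\Pr(h\mid s)$ is well-defined anyway. I would dispatch this by producing a minimal witnessing POMDP with a history-dependent policy in which the obstruction is genuine: for instance, a process with a recurrent or absorbing state $s$ visited at infinitely many distinct times, so that $\Pr(H_t=h\mid S_t=s)$ is necessarily supported on histories of length $t$ and thus depends on $t$. This shows the obstruction is generic, and combined with the reduction above completes the proof that $\vpolicy(s)$ is generally ill-defined under partial observability.
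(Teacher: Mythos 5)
Your proposal follows essentially the same route as the paper: reduce the well-definedness of $\vpolicy(s)$ through the Bellman form and \Cref{eq:pras} to that of $\Pr(h\mid s)$, then show that neither grounding to timed RVs (the marginalization spans histories of all lengths) nor time-invariance (the sample spaces of $H_t$ differ across $t$) can rescue it. Your additional step of exhibiting a witnessing POMDP to justify the word ``generally'' is a reasonable extra precaution but does not change the argument, which matches the paper's.
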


The practical implications of an ill-defined value function are not obvious;
even though the analytical value function $\vpolicy(s)$ is ill-defined, the
state critic's $\vmodel(s)$ training process is based on valid calculations
over sample data, which results in syntactically valid updates of the critic
parameters.  However, given that asymptotic convergence is theoretically
impossible when $\vpolicy(s)$ is ill-defined, the critic's target will continue
shifting indefinitely based on the recent batches of training data, even when
unbiased Monte Carlo return estimates are used to train the critic (without
bootstrapping).
In practice, the effects are not necessarily catastrophic for all control
problems, and likely vary depending on the amount of partial observability, on
the agent's need to gather and remember information, and on the specific state
and observation representations.

In principle, \emph{timed} value functions $\vpolicy_t(s)$ represent a
straightforward solution to all these issues (see
appendix~\cite{baisero_unbiased_2022}).  However, learning a timed critic model
is likely to pose additional learning challenges, due to the need to generalize
well and accurately across time-steps.
Rather, we will demonstrate that there are special cases of the general control
problem which do guarantee well-defined time-invariant value functions
$\vpolicy(s)$ (see \Cref{sec:special:reactive:po,sec:special:reactive:fo}).
However, before that, we can already show that, even when $\vpolicy(s)$ is
guaranteed to be well-defined, it is not guaranteed to be unbiased.

\begin{theorem}\label{thm:vs:bias}
  Even when well-defined, a time-invariant state value function $\vpolicy(s)$
  is generally a biased estimate of $\vpolicy(h)$, i.e., it is not guaranteed
  that $\vpolicy(h) = \Exp_{s\mid h}\left[ \vpolicy(s) \right]$.
\end{theorem}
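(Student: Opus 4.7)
The plan is to prove \Cref{thm:vs:bias} by exhibiting an explicit counterexample, since the claim is an existential negative: a single POMDP and policy witnessing $\vpolicy(h) \neq \Exp_{s\mid h}[\vpolicy(s)]$ is enough. To honor the theorem's premise that $\vpolicy(s)$ is well-defined, I would draw the example from the reactive-agent regime treated in the subsequent subsections, so that $\Pr(a\mid s)$ (and therefore $\vpolicy(s)$) is a stationary, well-defined quantity.

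Concretely, I would use a stationary two-state, two-observation, two-action POMDP in the spirit of the Tiger problem: states $\{s_1, s_2\}$ with uniform prior, an \emph{asymmetric} noisy observation channel (say $\Pr(o_1 \mid s_1) \neq \Pr(o_2 \mid s_2)$), and a reward structure in which each action is correct in exactly one state. A reactive policy $\pi(a; o)$ is chosen to favor the action consistent with the most recent observation, which keeps the induced $\Pr(a \mid s)$ well-defined and time-invariant as required.

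The key comparison is then carried out at a one-step history $h$ consisting of a single observation. First, Bayes' rule produces the posterior belief $b(\cdot \mid h)$, from which $\vpolicy(h)$ follows directly via \Cref{eq:vh}. Second, $\Exp_{s\mid h}[\vpolicy(s)]$ is computed by averaging $\vpolicy(s_1)$ and $\vpolicy(s_2)$ against the same belief, where each $\vpolicy(s_i)$ internally uses the observation-averaged action distribution $\Pr(a\mid s_i) = \sum_o \Pr(o \mid s_i)\, \pi(a; o)$. Direct arithmetic then shows the two quantities differ, and the resulting gap can be traced back analytically to the mismatch between the observation-conditioned $\pi(a; o)$ appearing inside $\vpolicy(h)$ and the observation-marginalized $\Pr(a\mid s)$ appearing inside $\vpolicy(s)$ as per \Cref{eq:pras}.

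The main obstacle is choosing an example that simultaneously satisfies the well-defined premise and exhibits strictly non-zero bias, since perfectly symmetric choices can accidentally produce cancellations at the chosen history (as happens, for instance, when the state values and observation channel are both fully symmetric). Introducing asymmetry in either the observation model or the reward structure suffices to avoid this coincidence, after which the remainder of the proof is a routine application of Bayes' rule and a single Bellman expansion, and needs no machinery beyond what has been established in \Cref{sec:a2c,sec:bg:aa2c}.
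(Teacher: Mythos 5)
Your counterexample strategy is sound and would establish the theorem, but it is a genuinely different route from the paper's main proof. The paper argues abstractly by contradiction: it picks two histories $h'\neq h''$ that induce \emph{different} action distributions but share the \emph{same} belief $b(h')=b(h'')$ (a common occurrence in POMDPs, with concrete instances given in the appendix); different future behavior forces $\vpolicy(h')\neq\vpolicy(h'')$, while equal beliefs force $\Exp_{s\mid h'}[\vpolicy(s)]=\Exp_{s\mid h''}[\vpolicy(s)]$, so the unbiasedness identity cannot hold at both histories. That argument needs no specific POMDP, no arithmetic, and no reactivity assumption, and it shows the failure is generic rather than an artifact of one construction; its cost is being non-constructive (it does not say which of the two histories is the violator). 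Your route --- an explicit asymmetric two-state example with a reactive policy, comparing $\pi(a;o_h)$ inside $\vpolicy(h)$ against the observation-marginalized $\Pr(a\mid s)=\sum_o\Pr(o\mid s)\pi(a;o)$ inside $\vpolicy(s)$ --- is essentially the paper's own supplementary ``proof by example'' in the appendix (the \emph{good/bad} POMDP), and it buys an explicit numerical gap plus a mechanistic explanation of the bias as a lossy $h\to s\to o$ round trip; you also correctly anticipate that fully symmetric constructions can cancel at short histories. One loose end to tighten: you say $\vpolicy(h)$ ``follows directly via \Cref{eq:vh},'' but that Bellman form recurses through all successor histories, so you must either exploit the self-transitioning structure to get a closed form (condition on the fixed state, note the first reward is pinned by $o_h$ while subsequent rewards are i.i.d.\ given $s$), or do as the paper does and bootstrap one step while assuming the identity at the successor histories, turning the numeric mismatch into a contradiction.
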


\begin{proof}
  Consider two histories which are different, $h'\neq h''$, and result in
  different action distributions, $\policy(A; h') \neq \policy(A; h'')$, but
  are associated with the same belief, $b(h') = b(h'')$---a fairly common
  occurrence in many POMDPs (see appendix~\cite{baisero_unbiased_2022}).  On
  one hand, because the two histories result in different behaviors, future
  trajectories and rewards will differ, leading to different history values,
  $\vpolicy(h') \neq \vpolicy(h'')$.
  On the other hand, because the two beliefs are equal, the expected state
  values must also be equal, $\Exp_{s\mid h'}\left[ \vpolicy(s) \right] =
  \Exp_{s\mid h''}\left[ \vpolicy(s) \right]$.
  If equation $\vpolicy(h) = \Exp_{s\mid h}\left[ \vpolicy(s) \right]$ held for
  all histories, then it would hold for $h'$ and $h''$ too, which implies
  $\vpolicy(h') = \Exp_{s\mid h'}\left[ \vpolicy(s) \right] = \Exp_{s\mid
  h''}\left[ \vpolicy(s) \right] = \vpolicy(h'')$ ---a simple contradiction.
  Therefore, either $\vpolicy(h') \neq \Exp_{s\mid h'}\left[ \vpolicy(s)
  \right]$ or $\vpolicy(h'') \neq \Exp_{s\mid h''}\left[ \vpolicy(s) \right]$
  (or both).
\end{proof}

\subsection{Reactive Policy under Partial
Observability}\label{sec:special:reactive:po}

We show that $\vpolicy(s)$ is well-defined if we make two assumptions about the
agent and environment:
\begin{enumerate*}[label=(\alph*)]
  \item that the policy is reactive (a common but inadequate assumption); and
  \item that the POMDP observation function depends only on the current state,
    $\ofn\colon\sset\to\Delta\oset$, rather than the entire state transition (a
    mild assumption).
\end{enumerate*}
Under these assumptions, we can expand $\Pr(a\mid s)$ by integrating over the
space of all observations (rather than all histories),
\begin{equation}
  \Pr(a\mid s) = \sum_{o\in\oset} \Pr(o\mid s) \pi(a; o) \,.
\end{equation}
\noindent In this case, $\Pr(o\mid s)$ is time-invariant, and can therefore be
implicitly grounded to RVs of any time index $\Pr(O_t=o \mid S_t=s)$.  This
leads to a well-defined value $\vpolicy(s)$ which, however, generally remains
\emph{biased} compared to $\vpolicy(h)$, per \Cref{thm:vs:bias}.
In addition to \Cref{thm:vs:bias}, which is applicable in a more general
setting, see appendix~\cite{baisero_unbiased_2022} for two additional proofs
which also take into account the specific assumptions made here.
Broadly speaking, the bias is caused by the fact that hidden in $\vpolicy(s)$
is an expectation over observations $o$ which are not necessarily consistent
with the true history $h$;  each proof covers this issue from different angles.

Although the value function is well-defined under reactive control, there are
still two significant issues which preclude these assumptions from representing
a general solution:
\begin{enumerate*}[label=(\alph*)]
  \item reactive policies are inadequate to solve many POMDPs; and
  \item the value function bias may prevent the agent from learning a
    satisfactory behavior.
\end{enumerate*}

\subsection{Reactive Policy under Full
Observability}\label{sec:special:reactive:fo}

We show that the state value function is both well-defined and unbiased under
two assumptions:
\begin{enumerate*}[label=(\alph*)]
  \item that the policy is reactive (a common but inadequate assumption); and
  \item that there is a bijective abstraction $\phi\colon\oset\to\sset$ between
    observations and states (an unrealistic assumption).
\end{enumerate*}
The abstraction $\phi$ encodes the fact that the environment is not truly
partially observable, but rather that states and observations fundamentally
contain the same information, albeit at different levels of abstraction.  For
example, in the control problems used by Pinto et
al.~\cite{pinto_asymmetric_2017}, and an image displaying a workspace without
occlusions is a low-level abstraction (observation), while a concise vector
representation of the object poses in the workspace are a high-level
abstraction (state).

In this case, the action probability term $\Pr(a\mid s)$ does not need to be
obtained indirectly by integrating other variables;  rather, bijection $\phi$
can be used to relate it to the policy model $\Pr(a\mid s) = \pi(a;
\phi\I(s))$.  Contrary to the previous cases, the overall state value function
$\vpolicy(s)$ is not only well-defined, but also unbiased.

\begin{theorem}\label{thm:vs.reactive.fo}
  If the POMDP states and observations are related by a bijection
  $\phi\colon\oset\to\sset$, and the policy is reactive, then $\vpolicy(s)$ is
  an unbiased estimate of $\vpolicy(h)$, i.e., $\vpolicy(h) = \Exp_{s\mid
  h}\left[ \vpolicy(s) \right]$.
\end{theorem}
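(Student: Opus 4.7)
The plan is to show that, under the bijection and reactivity assumptions, the future trajectory distribution is Markov in the current state, which lets the tower property collapse $\Exp_{s\mid h}\left[ \vpolicy(s) \right]$ exactly to $\vpolicy(h)$.

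First, I would dispose of well-definedness. The bijection $\phi$ forces each state to emit a unique observation $\phi\I(s)$, which is a strictly stronger form of the ``observation depends only on the current state'' assumption used in \Cref{sec:special:reactive:po}; hence $\vpolicy(s)$ inherits well-definedness from that earlier argument, with the additional simplification that $\Pr(a\mid s)$ is directly equal to $\pi(a;\phi\I(s))$ rather than obtained by marginalizing over $o$.

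Second, I would apply the tower property by conditioning on $S\iter{0}\sim b(h)$,
$$
  \vpolicy(h) = \Exp_{s\mid h}\left[ \Exp_{\bar s, \bar a \mid h, s}\left[ \sum_{k=0}^\infty \gamma^k R(s\iter{k}, a\iter{k}) \right] \right] \,,
$$
and then argue the inner expectation is independent of $h$ once $s\iter{0}=s$ is fixed. A chain-rule factorization of $\Pr(\bar s, \bar a \mid h, s\iter{0})$ shows that every factor depends only on the current state and action: transitions via $T$, rewards via $R$, and actions via $\pi(\cdot;\phi\I(s\iter{k}))$ thanks to reactivity combined with the bijection. By induction on $k$, the conditioning on $h$ drops out of every factor, so the inner expectation equals $\vpolicy(s)$ and the identity $\vpolicy(h) = \Exp_{s\mid h}\left[ \vpolicy(s) \right]$ follows.

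The main obstacle is formalizing the conditional-independence step. One must verify that the growing history formed by appending future $(a\iter{k}, o\iter{k})$ never re-introduces dependence on the original $h$ when the policy is consulted at step $k$. Reactivity is the essential ingredient---the agent never looks at past actions or observations---and the bijection then makes the relevant last observation $o\iter{k} = \phi\I(s\iter{k})$ a function of the current state alone, completing the reduction to a fully observable MDP governed by the current state and sidestepping the belief-mismatch obstruction that underlies \Cref{thm:vs:bias}.
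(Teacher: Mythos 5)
Your proposal is correct: reactivity plus the bijection makes the action at every step a function of the current state alone (via $o\iter{k}=\phi\I(s\iter{k+1})$ and, at step $0$, $o_h=\phi\I(s\iter{0})$ almost surely under $b(h)$), so the future trajectory law factorizes Markovianly in the state, the conditioning on $h$ drops out, and the tower property gives the identity. The route differs from the paper's in packaging rather than substance. The paper starts from $\Exp_{s\mid h}\left[ \vpolicy(s) \right]$ and unrolls the Bellman recursions of \Cref{eq:vs,eq:qsa}, substituting $\Pr(a\mid s)=\pi(a;o_h)$ and $\Exp_{s\mid h}\left[ R(s,a) \right]=R(h,a)$ at each stage and ``repeating until end of episode''; you instead work at the level of trajectory distributions, which amounts to showing $\vpolicy(h,s)=\vpolicy(s)$ under these assumptions and then invoking the decomposition $\vpolicy(h)=\Exp_{s\mid h}\left[ \vpolicy(h,s) \right]$ of \Cref{thm:vhs}. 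Both arguments rest on the same induction over time steps and the same crux (the state determines the reactive agent's action), but your version makes the reduction to a fully observable MDP explicit and cleanly isolates the conditional-independence claim that fails in \Cref{thm:vs:bias}, whereas the paper's stays entirely within the Bellman formalism of \Cref{sec:aa2c} and does not need the history-state machinery introduced later. Your preliminary remark on well-definedness is also sound: deterministic emission $o=\phi\I(s)$ is a strict strengthening of the assumption in \Cref{sec:special:reactive:po}, and it makes $\Pr(a\mid s)=\pi(a;\phi\I(s))$ independent of the problematic term $\Pr(h\mid s)$.
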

\begin{proof}
  The bijection between $o_h$ and $s$ not only implies a many-to-one
  relationship between histories and states, but also fully determines the
  agent's state-conditioned action.  In the following derivation, we use these
  facts to determine the first action and reward, a process which can be
  repeated indefinitely for future actions and rewards.
  \begin{align}
    \Exp_{s\mid h}\left[ \vpolicy(s) \right] &= \Exp_{s\mid h}\left[
    \sum_{a\in\aset} \Pr(a\mid s) \qpolicy(s, a) \right] \nonumber \\
    &= \Exp_{s\mid h}\left[ \sum_{a\in\aset} \pi(a; o_h) \qpolicy(s, a) \right]
    \nonumber \\
    &= \sum_{a\in\aset} \pi(a; o_h) \Exp_{s\mid h}\left[ \qpolicy(s, a) \right]
    \nonumber \\
    &= \sum_{a\in\aset} \pi(a; o_h) \Exp_{s\mid h}\left[ R(s, a) +
    \gamma\Exp_{s'\mid s, a}\left[ \vpolicy(s') \right] \right] \nonumber \\
    &= \sum_{a\in\aset} \pi(a; o_h) \left( R(h, a) + \gamma\Exp_{s'\mid h,
    a}\left[ \vpolicy(s') \right] \right) \nonumber \\
    &= \sum_{a\in\aset} \pi(a; o_h) \left( R(h, a) + \gamma\Exp_{o\mid h,
    a}\left[ \Exp_{s'\mid hao}\left[ \vpolicy(s') \right] \right] \right) \nonumber \\
    \intertext{(repeat process until end of episode)}
    &= \sum_{a\in\aset} \pi(a; o_h) \left( R(h, a) + \gamma\Exp_{o\mid h,
    a}\left[ \vpolicy(hao) \right] \right) \nonumber \\
    &= \sum_{a\in\aset} \pi(a; o_h) \qpolicy(h, a) \nonumber \\
    &= \vpolicy(h) \,.
  \end{align}

\end{proof}

The benefit of using a state critic under this scenario is that the critic
model can avoid learning a representation of the observations before learning
the values~\cite{pinto_asymmetric_2017}.
Naturally, the main disadvantage of this scenario is that most POMDPs do not
satisfy the bijective abstraction assumption;  if anything, this assumption is
intrinsically incompatible with partial observability, and any POMDP which
satisfies this assumption is really an MDP in disguise.  Nonetheless, if a
control problem only deviates mildly from full observability, it is likely that
a state critic will benefit the learning agent despite the theoretical issues.

\section{Unbiased Asymmetric Actor-Critic}\label{sec:uaa2c}

In this section, we introduce \emph{unbiased asymmetric actor-critic}, an
actor-critic variant able to exploit asymmetric state information during
offline training while avoiding the issues of state value functions exposed in
\Cref{sec:aa2c}.
Consider the \emph{history-state} value function $\vpolicy(h,
s)$~\cite{bono_study_2018}, defined as the expected return following a
realizable history-state pair $h$ and $s$,
\begin{equation}
  \vpolicy(h\iter{0}, s\iter{0}) = \Exp_{\bar s, \bar a \mid
  h\iter{0},s\iter{0}}\left[ \sum_{k=0}^\infty \gamma^k R(s\iter{k}, a\iter{k})
\right] \,, \label{eq:vhs-raw}
\end{equation}
\noindent which supports an indirect recursive Bellman form,
\begin{align}
  \vpolicy(h, s) &= \sum_{a\in\aset} \policy(a; h) \qpolicy(h, s, a) \,,
  \label{eq:vhs} \\
  \qpolicy(h, s, a) &= \rfn(s, a) + \gamma \Exp_{s',o\mid s,a}\left[
  \vpolicy(hao, s') \right] \,. \label{eq:qhsa}
\end{align}

Note that the history $h$ and state $s$ cover different and orthogonal roles:
the history $h$ determines the future behavior of the agent, while the state
$s$ determines the future behavior of the environment.  Compared to the history
value $\vpolicy(h)$, the state information in $\vpolicy(h, s)$ provides
additional context to determine the agent's true underlying situation, its
rewards, and its expected return.  Compared to the state value $\vpolicy(s)$,
the history information in $\vpolicy(h, s)$ provides additional context to
determine the agent's future behavior, which guarantees that $\vpolicy(h, s)$
is well-defined and unbiased.

\begin{theorem}\label{thm:vhs}
  For arbitrary control problems and policies, $\vpolicy(h, s)$ is an unbiased
  estimate of $\vpolicy(h)$, i.e., $\vpolicy(h) = \Exp_{s\mid h}\left[
  \vpolicy(h, s) \right]$.
\end{theorem}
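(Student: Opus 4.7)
The plan is to prove the identity by direct application of the tower property (law of total expectation) to the raw definitions of $\vpolicy(h)$ and $\vpolicy(h, s)$ in \Cref{eq:vh-raw,eq:vhs-raw}. The critical enabling observation, guaranteed by the discussion in \Cref{sec:graphmodel}, is that $\vpolicy(h, s)$ is grounded to the timed RVs $H_t = h$ and $S_t = s$ with $t = \lvert h \rvert$, which is a strict refinement of the conditioning $H_t = h$ used in $\vpolicy(h)$. In particular the belief $\Pr(s \mid h) = \Pr(S_t = s \mid H_t = h)$ is well-defined (unlike the problematic $\Pr(h \mid s)$ exposed in \Cref{sec:general}), so the outer expectation $\Exp_{s \mid h}[\cdot]$ is meaningful.

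First I would substitute the raw definition into the outer expectation, obtaining $\Exp_{s\mid h}\left[ \vpolicy(h, s) \right] = \Exp_{s\mid h}\left[ \Exp_{\bar s, \bar a \mid h, s}\left[ \sum_{k=0}^\infty \gamma^k R(s\iter{k}, a\iter{k}) \right] \right]$. Second, I would apply the tower property: conditioning the inner expectation on $(h, s)$ and then averaging over $s \sim s \mid h$ collapses to conditioning on $h$ alone, yielding $\Exp_{\bar s, \bar a \mid h}\left[ \sum_{k=0}^\infty \gamma^k R(s\iter{k}, a\iter{k}) \right]$. Third, I would recognize this as precisely the raw definition of $\vpolicy(h)$ per \Cref{eq:vh-raw}, concluding the proof.

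If a more explicit presentation is preferred (mirroring the style of the proof of \Cref{thm:vs.reactive.fo}), an alternative is a Bellman-style expansion: start from \Cref{eq:vhs,eq:qhsa}, pull the policy term $\policy(a; h)$ out of $\Exp_{s\mid h}[\cdot]$ (it does not depend on $s$), apply the identity $\Exp_{s\mid h}\left[ \rfn(s, a) \right] = \rfn(h, a)$ from the definition of the history reward function, and use $\Exp_{s\mid h} \Exp_{s', o\mid s, a}[\cdot] = \Exp_{o\mid h, a} \Exp_{s'\mid hao}[\cdot]$ to match the recursion in \Cref{eq:qha}, concluding by induction on episode length. The main obstacle is notational rather than mathematical: ensuring throughout that every conditional probability is grounded to well-defined timed RVs, which here is immediate because the conditioning is always on a history (timed) or on a history paired with the state at the corresponding time index, so none of the ill-grounding issues of \Cref{sec:general} arise.
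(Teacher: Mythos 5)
Your proposal is correct and takes essentially the same route as the paper: the paper's proof is exactly the one-line application of the tower property to the raw definitions in \Cref{eq:vh-raw,eq:vhs-raw}, inserting $\Exp_{s\iter{0}\mid h\iter{0}}$ and collapsing the nested conditioning on $(h\iter{0}, s\iter{0})$ back to $h\iter{0}$. Your additional remarks on grounding to timed RVs and the alternative Bellman-style expansion are consistent elaborations but not needed beyond what the paper does.
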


\begin{proof}
  Follows from \Cref{eq:vh-raw,eq:vhs-raw},
  \begin{align}
    \vpolicy(h\iter{0}) &= \Exp_{\bar s,\bar a\mid h\iter{0}}\left[
    \sum_k \gamma^k R(s\iter{k}, a\iter{k}) \right] \nonumber \\
    &= \Exp_{s\iter{0}\mid h\iter{0}} \Exp_{\bar s,\bar a\mid
    h\iter{0},s\iter{0}}\left[ \sum_k \gamma^k R(s\iter{k}, a\iter{k}) \right]
    \nonumber \\
    &= \Exp_{s\iter{0}\mid h\iter{0}}\left[ \vpolicy(h\iter{0}, s\iter{0}) \right] \,.
  \end{align}
\end{proof}

As we have done for state values $\vpolicy(s)$, we are interested in the
properties of history-state values $\vpolicy(h, s)$ in relation to history
values $\vpolicy(h)$.
\Cref{thm:vhs} shows that history and history-state values are related by
$\vpolicy(h) = \Exp_{s\mid h}\left[ \vpolicy(h, s) \right]$, i.e., history-state
values are interpretable as \emph{Monte Carlo (MC) estimates} of the respective
history values.  In expectation, history-state values provide the same
information as the history values, therefore an asymmetric variant of the
policy gradient theorem can be formulated.

\begin{theorem}[Asymmetric Policy
  Gradient]\label{thm:asymmetric-policy-gradient}
  \begin{equation}
    \nabla_\pparams\ploss(\pparams) = -\Exp\left[ \sum_t \gamma^t \qpolicy(h_t,
    s_t, a_t) \nabla_\pparams \log\policy(a_t; h_t) \right] \,.
  \end{equation}
\end{theorem}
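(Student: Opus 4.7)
The plan is to derive the asymmetric policy gradient by starting from the standard (symmetric) POMDP policy gradient of \Cref{eq:policy-gradient} and replacing the history $Q$-value $\qpolicy(h_t,a_t)$ with its history-state counterpart $\qpolicy(h_t,s_t,a_t)$ via an application of the tower property of expectation. The key enabling fact is an action-value version of \Cref{thm:vhs}, namely the identity
\begin{equation}
  \qpolicy(h,a) = \Exp_{s\mid h}\left[\qpolicy(h,s,a)\right] \,,
\end{equation}
which I would establish as a short preliminary lemma.

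To prove this lemma, I would expand both sides using the recursive forms in \Cref{eq:qha,eq:qhsa}. On the right-hand side, $\Exp_{s\mid h}[\rfn(s,a)]=\rfn(h,a)$ holds by the definition of the history reward. For the bootstrapped term, I would chain expectations: $\Exp_{s\mid h}\Exp_{s',o\mid s,a}[\vpolicy(hao,s')] = \Exp_{o\mid h,a}\Exp_{s'\mid hao}[\vpolicy(hao,s')]$, where the inner expectation collapses to $\vpolicy(hao)$ by direct application of \Cref{thm:vhs} to the successor history $hao$. This matches $\gamma\,\Exp_{o\mid h,a}[\vpolicy(hao)]$ on the left, completing the lemma.

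With the lemma in hand, I would proceed as follows. Starting from \Cref{eq:policy-gradient}, I substitute $\qpolicy(h_t,a_t) = \Exp_{s_t\mid h_t}[\qpolicy(h_t,s_t,a_t)]$, using the fact that $s_t$ is conditionally independent of $a_t$ given $h_t$ (the policy depends only on the history). Since the score $\nabla_\pparams \log\policy(a_t;h_t)$ is a function of $h_t$ and $a_t$ only, it is measurable with respect to the conditioning and can be pulled inside the inner expectation. Then, by the tower property, the nested expectation $\Exp[\,\cdot\, \Exp_{s_t\mid h_t}[\qpolicy(h_t,s_t,a_t)]\,]$ merges into a single expectation over the joint distribution of $(h_t,s_t,a_t)$, producing the stated expression.

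The main obstacle is really just the lemma: taking care that the conditioning chain $s\mid h$ followed by $s',o\mid s,a$ correctly integrates to the joint $s',o\mid h,a$ used by $\vpolicy(hao)$, and that \Cref{thm:vhs} is applied to the appropriate successor history. Everything else is a routine manipulation of expectations and does not require any additional structure beyond what is already available in the symmetric policy gradient theorem.
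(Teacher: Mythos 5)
Your proposal is correct and follows essentially the same route as the paper: the paper's proof likewise first establishes the key identity $\qpolicy(h,a)=\Exp_{s\mid h}\left[\qpolicy(h,s,a)\right]$ by unrolling the Bellman forms and invoking \Cref{thm:vhs} on the successor history $hao$, and then merges the nested expectation into the joint over $(h_t,s_t,a_t)$ exactly as you describe. Your explicit remark that $s_t$ is conditionally independent of $a_t$ given $h_t$ makes precise a step the paper leaves implicit, but the argument is otherwise the same.
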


\begin{proof}
  Following \Cref{thm:vhs}, we have
  \begin{align}
    \qpolicy(h, a) &= R(h, a) + \gamma\Exp_{o\mid h, a}\left[ \vpolicy(hao)
    \right] \nonumber \\
    &= R(h, a) + \gamma\Exp_{o\mid h, a}\left[ \Exp_{s'\mid h, a, o}\left[
    \vpolicy(hao, s') \right] \right] \nonumber \\
    &= R(h, a) + \gamma\Exp_{s', o\mid h, a}\left[ \vpolicy(hao, s') \right]
    \nonumber \\
    &= \Exp_{s\mid h}\left[ R(s, a) + \gamma\Exp_{s', o\mid s, a}\left[
    \vpolicy(hao, s') \right] \right] \nonumber \\
    &= \Exp_{s\mid h}\left[ \qpolicy(h, s, a) \right] \,.
  \end{align}
  Therefore,
  \begin{align}
    &\phantom{{}={}} \nabla_\pparams\ploss(\pparams) \nonumber \\
    &= -\Exp\left[ \sum_t \gamma^t \qpolicy(h_t,
    a_t) \nabla_\pparams \log\policy(a_t; h_t) \right] \nonumber \\
    &= -\sum_t \gamma_t \Exp_{h_t,a_t}\left[ \qpolicy(h_t, a_t) \nabla_\pparams
    \log\policy(a_t; h_t) \right] \nonumber \\
     &= -\sum_t \gamma^t \Exp_{h_t,a_t}\left[ \Exp_{s_t\mid h_t}\left[
     \qpolicy(h_t, s_t, a_t) \right] \nabla_\pparams \log\policy(a_t; h_t)
   \right] \nonumber \\
    &= -\sum_t \gamma^t \Exp_{h_t,s_t,a_t}\left[ \qpolicy(h_t, s_t, a_t)
    \nabla_\pparams \log\policy(a_t; h_t) \right] \nonumber \\
    &= -\Exp\left[ \sum_t \gamma^t \qpolicy(h_t, s_t, a_t) \nabla_\pparams
    \log\policy(a_t; h_t) \right] \,.
  \end{align}
\end{proof}

As estimators, history-state values $\vpolicy(h, s)$ can be described in terms
of their bias and variance w.r.t.\ history values $\vpolicy(h)$.  Beyond
providing the inspiration for the MC interpretation, \Cref{thm:vhs} already
proves that $\vpolicy(h, s)$ is unbiased, while its variance is dynamic and
depends on the history $h$ via the belief-state $\Pr(S\mid h)$;  in particular,
low-uncertainty belief-states result in low variance, and deterministic
belief-states result in no variance.
Given that operating optimally in a partially observable environment generally
involves information-gathering strategies associated with low-uncertainty
belief-states, the practical variance of the history-state value is likely to
be relatively low once the agent has learned to solve the task to some degree
of success.

Inspired by \Cref{thm:asymmetric-policy-gradient}, we propose \emph{unbiased
asymmetric A2C}, which uses a history-state critic
$\vmodel\colon\hset\times\sset\to\realset$ trained to model history-state
values $\vpolicy(h, s)$,
\begin{align}
  &\nabla_\pparams\ploss(\pparams) = -\Exp\left[ \sum_t \gamma^t \delta_t
  \nabla_\pparams \log\policy(a_t; h_t) \right] \,, \\
  &\delta_t = R(s_t, a_t) + \gamma\vmodel(h_{t+1}, s_{t+1}) - \vmodel(h_t, s_t)
  \,. \label{eq:tderror:hs}
\end{align}
Because $\vmodel(h, s)$ receives the history $h$ as input, it can still predict
reasonable estimates of the agent's expected future discounted returns;  and
because it receives the state $s$ as input, it is still able to exploit state
information while introducing no bias into the learning process, e.g., for the
purposes of bootstrapping the learning of critic values and/or aiding the
learning of history representations.

\subsection{Interpretations of State}

Although the history-state value is analytically well-defined, it remains
worthwhile to question why the inclusion of the state information should help
the actor-critic agent at all.  We attempt to address this open question, and
consider two competing interpretations, which we call
\emph{state-as-information} and \emph{state-as-a-feature}.

\paragraph{State as Information}
Under this interpretation, state information is valuable because it is latent
information unavailable in the history, which results in more informative
values which help train the policy.  However, we argue that this interpretation
is flawed for two reasons:
\begin{enumerate*}[label=(\alph*)]
  \item The policy gradient theorem specifically requires $\vpolicy(h)$, which
    contains precisely the correct information required to accurately estimate
    policy gradients.  In this context, history values already contain the
    correct type and amount of information necessary to train the policy, and
    there is no such thing as ``more informative values'' than history values.
  \item In theory, the history-state value in
    \Cref{thm:asymmetric-policy-gradient} could use any other state sampled
    according to $\tilde s\sim b(h)$, rather than the true system state, which
    would also result in the same analytical bias and variance properties.  In
    practice, we only use the true system state due to it being directly
    available during offline training;  however, we believe that its identity
    as the true system state is analytically irrelevant, which leads to the
    next interpretation of state.
\end{enumerate*}

\paragraph{State as a Feature}\label{sec:uaa2c:SAF}
We conjecture an alternative interpretation according to which the state can be
seen as a \emph{stochastic} high-level feature of the history.
Consider a history critic $\vmodel(h)$;  to appropriately model the value
function $\vpolicy(h)$, $\vmodel(h)$ must first learn an adequate history
representation, which is in and of itself a significant learning challenge.
The critic model would likely benefit from receiving auxiliary high-level
features of the history $\phi(h)$.  The resulting critic $\vmodel(h, \phi(h))$
remains fundamentally a history critic, as the auxiliary features are
exclusively a modeling/architecture construct.
Next, we consider what kind of high-level features $\phi(h)$ would be useful
for control.  While the specifics of what makes a good history representation
depend strongly on the task, there is a natural choice which is arguably useful
in many cases:  the belief-state $b(h)$.  Because the belief-state is a
sufficient statistic of the history for control, providing it to the critic
model $\vmodel(h, b(h))$ is likely to greatly improve its ability to generalize
across histories.
Finally, we conjecture that \emph{any} state sampled according to the
belief-state $s\sim b(h)$---including the true system state---can be considered
a \emph{stochastic} realization of the belief-state feature, resulting in the
history-state critic $\vmodel(h, s)$.
According to this interpretation, the importance of the state in the
history-state critic is not in its identity as the true system state, but as a
stochastic realization of hypothetical belief-state features, and presumably
any other state sampled from the belief-state $\tilde s\sim b(h)$ could be
equivalently used.

\section{Evaluation}\label{sec:evaluation}

\begin{figure*}[t!]
  \centering
  \begin{subfigure}{.66\linewidth}
    \centering
    \includegraphics[width=\linewidth]{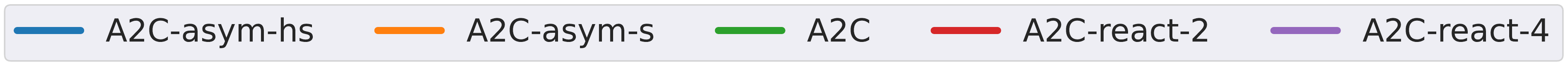}
  \end{subfigure}

  \begin{subfigure}{.24\linewidth}
    \centering
    \includegraphics[width=\linewidth]{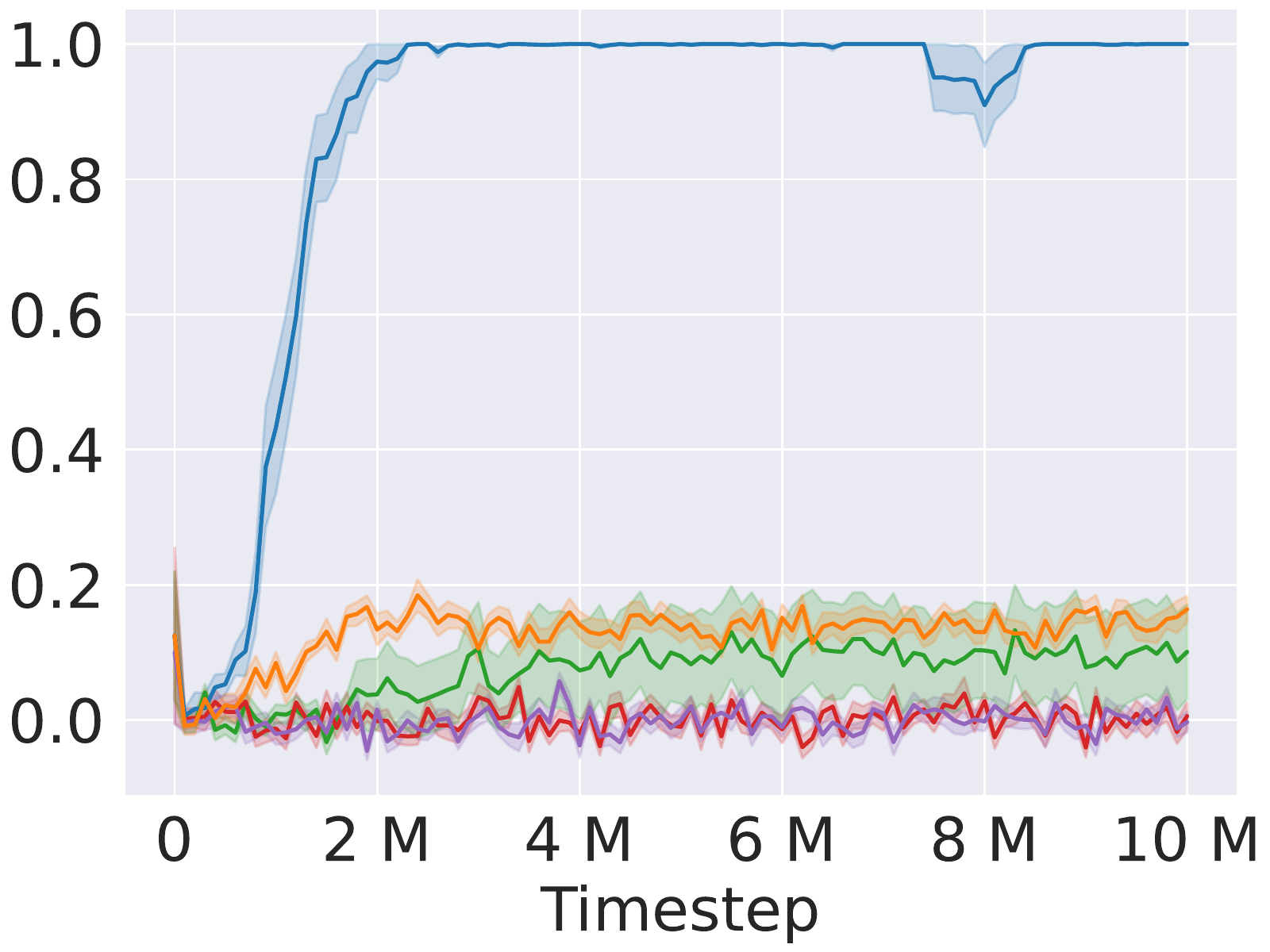}
    \caption{\heavenhellthree}\label{fig:performance:heavenhellthree}
  \end{subfigure}
  \begin{subfigure}{.24\linewidth}
    \centering
    \includegraphics[width=\linewidth]{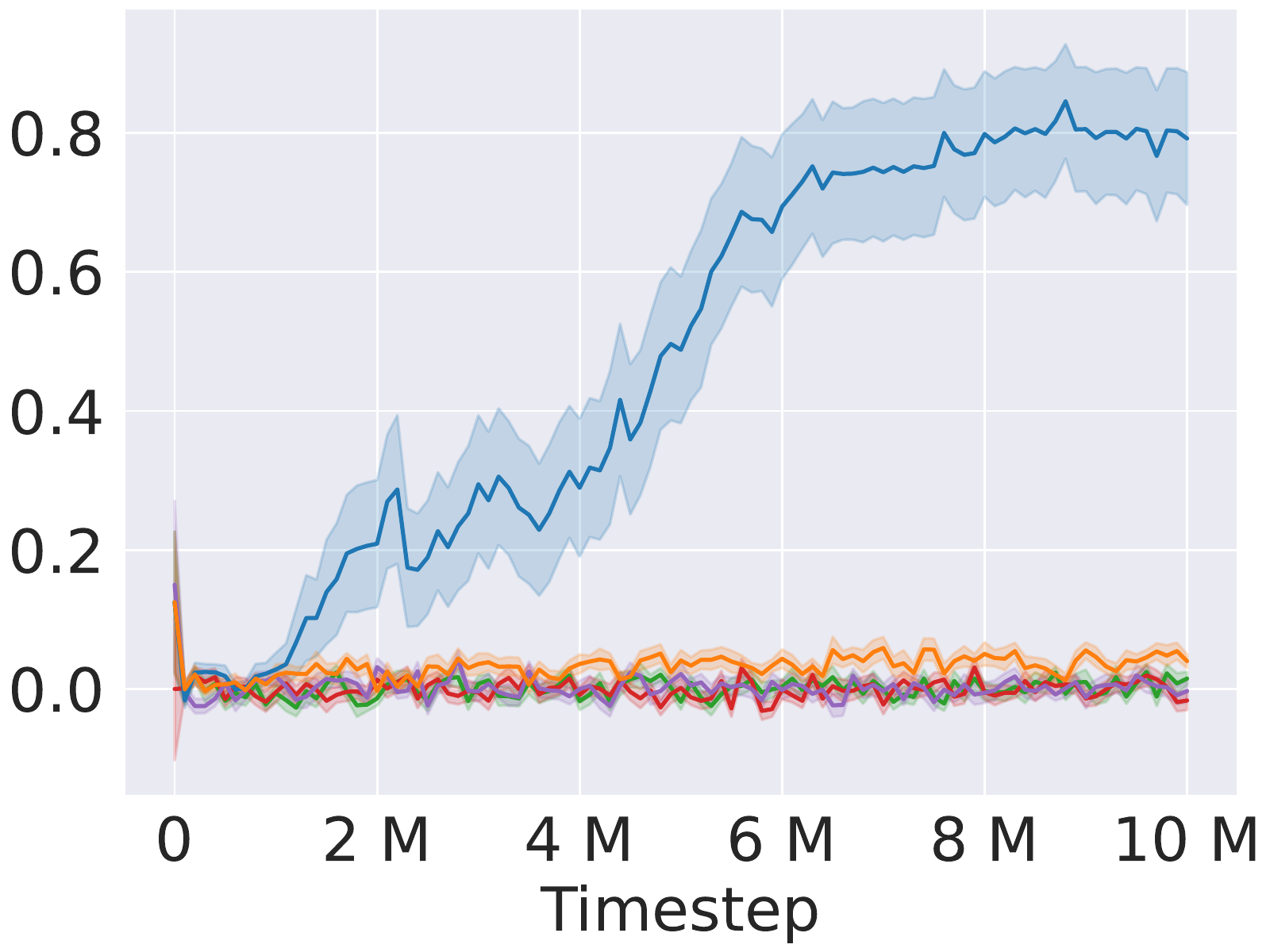}
    \caption{\heavenhellfour}\label{fig:performance:heavenhellfour}
  \end{subfigure}
  \begin{subfigure}{.24\linewidth}
    \centering
    \includegraphics[width=\linewidth]{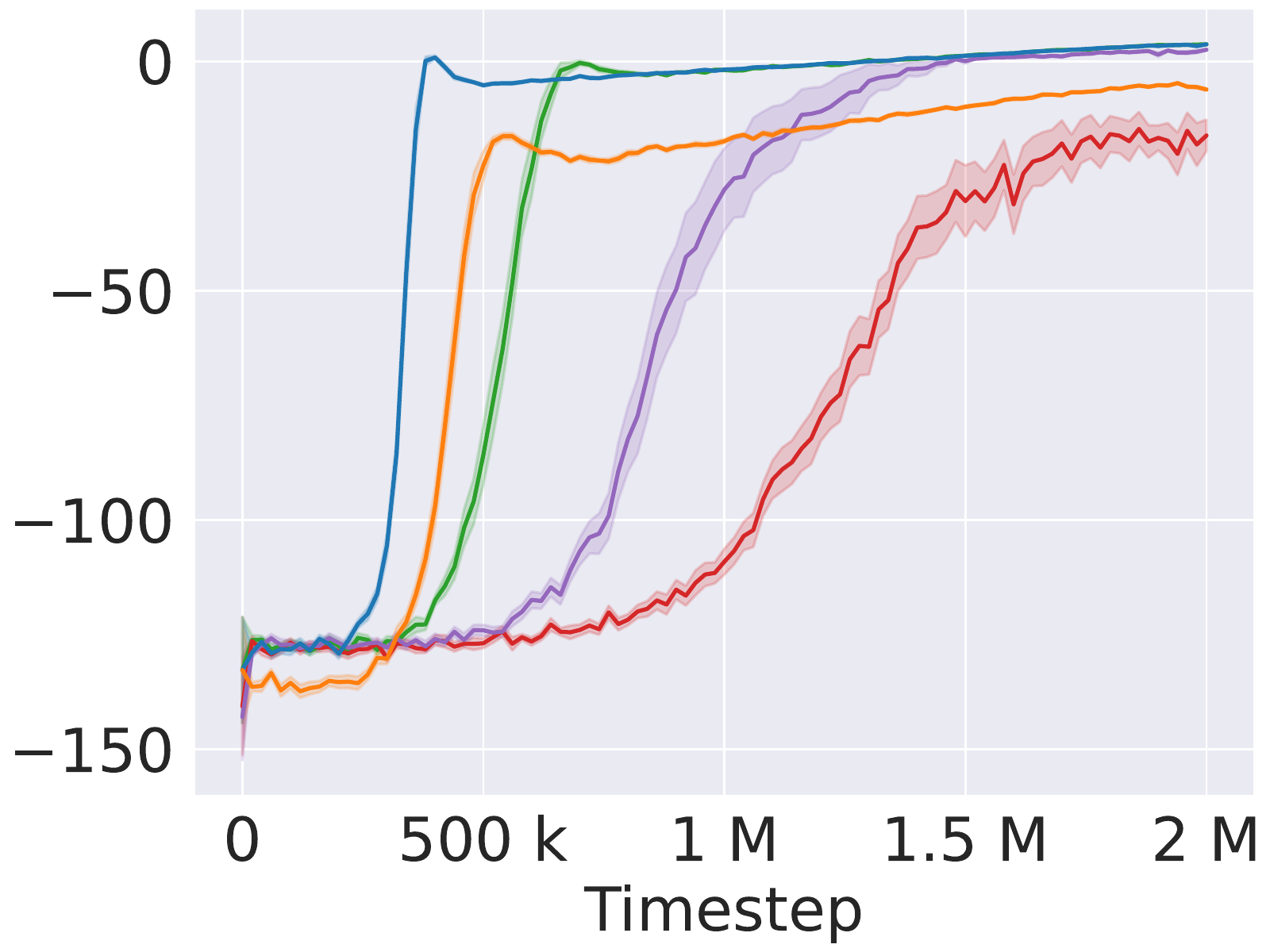}
    \caption{\shoppingfive}\label{fig:performance:shoppingfive}
  \end{subfigure}
  \begin{subfigure}{.24\linewidth}
    \centering
    \includegraphics[width=\linewidth]{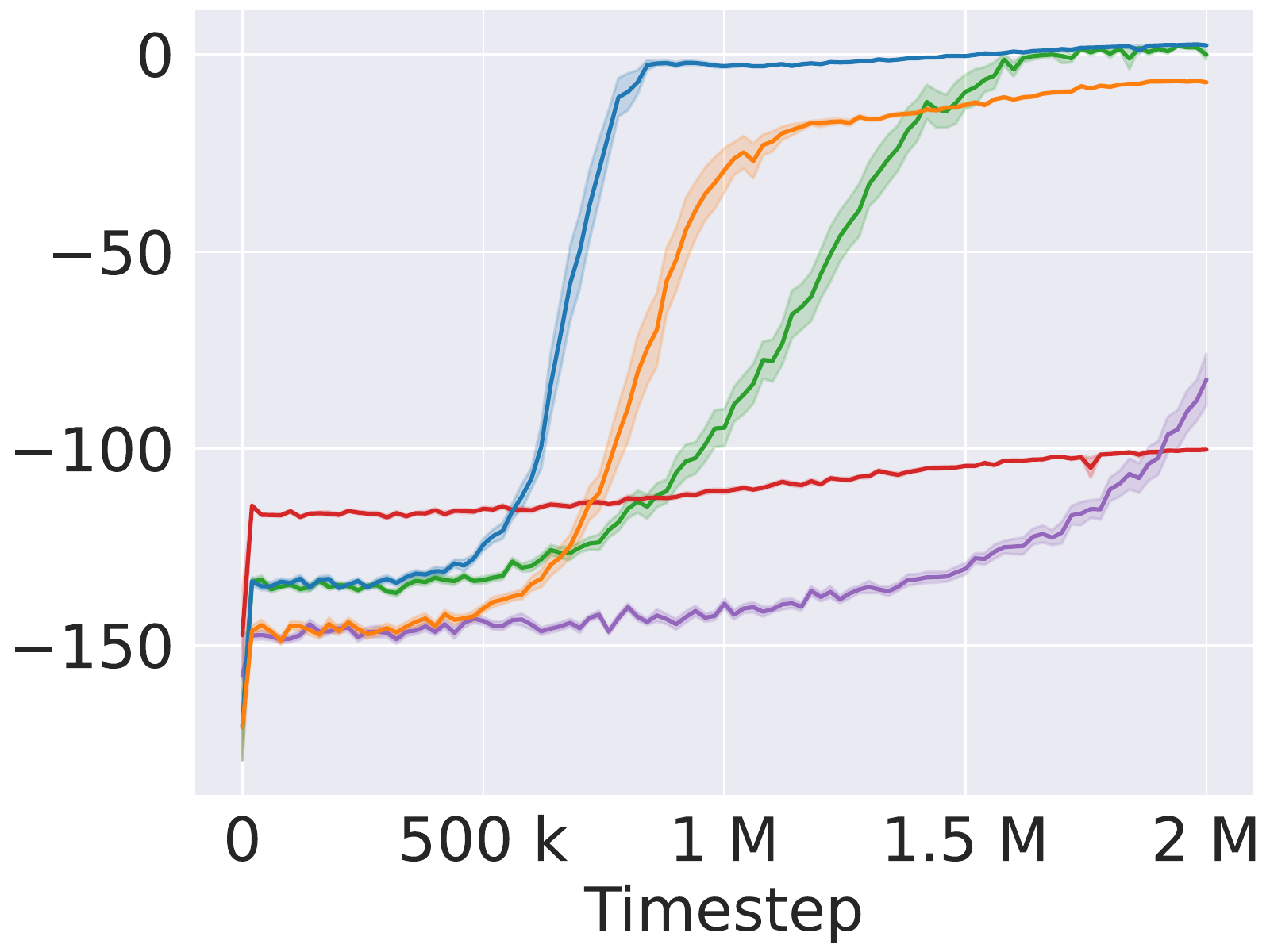}
    \caption{\shoppingsix}\label{fig:performance:shoppingsix}
  \end{subfigure}

  \begin{subfigure}{.24\linewidth}
    \centering
    \includegraphics[width=\linewidth]{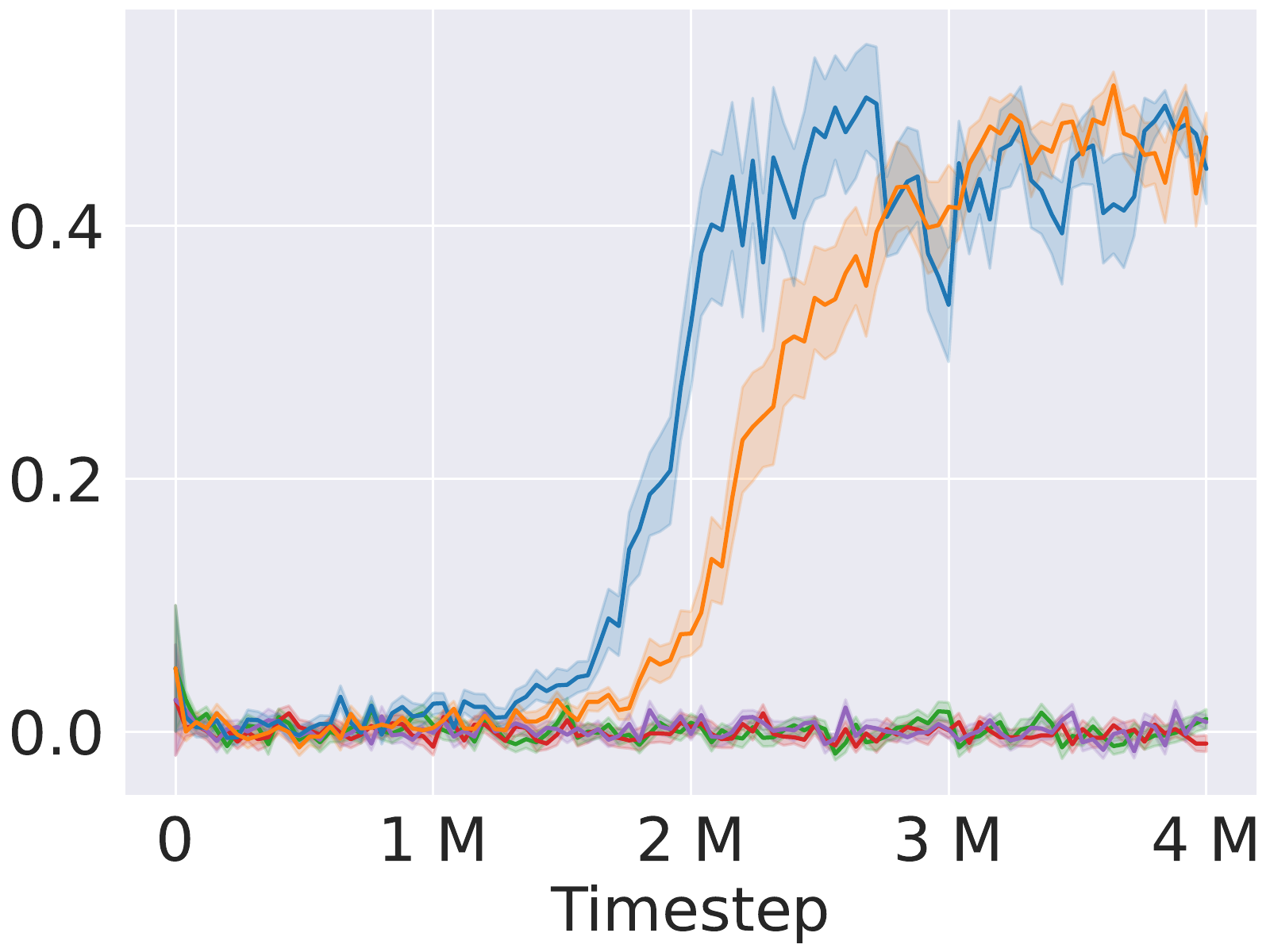}
    \caption{\carflag}\label{fig:performance:carflag}
  \end{subfigure}
  \begin{subfigure}{.24\linewidth}
    \centering
    \includegraphics[width=\linewidth]{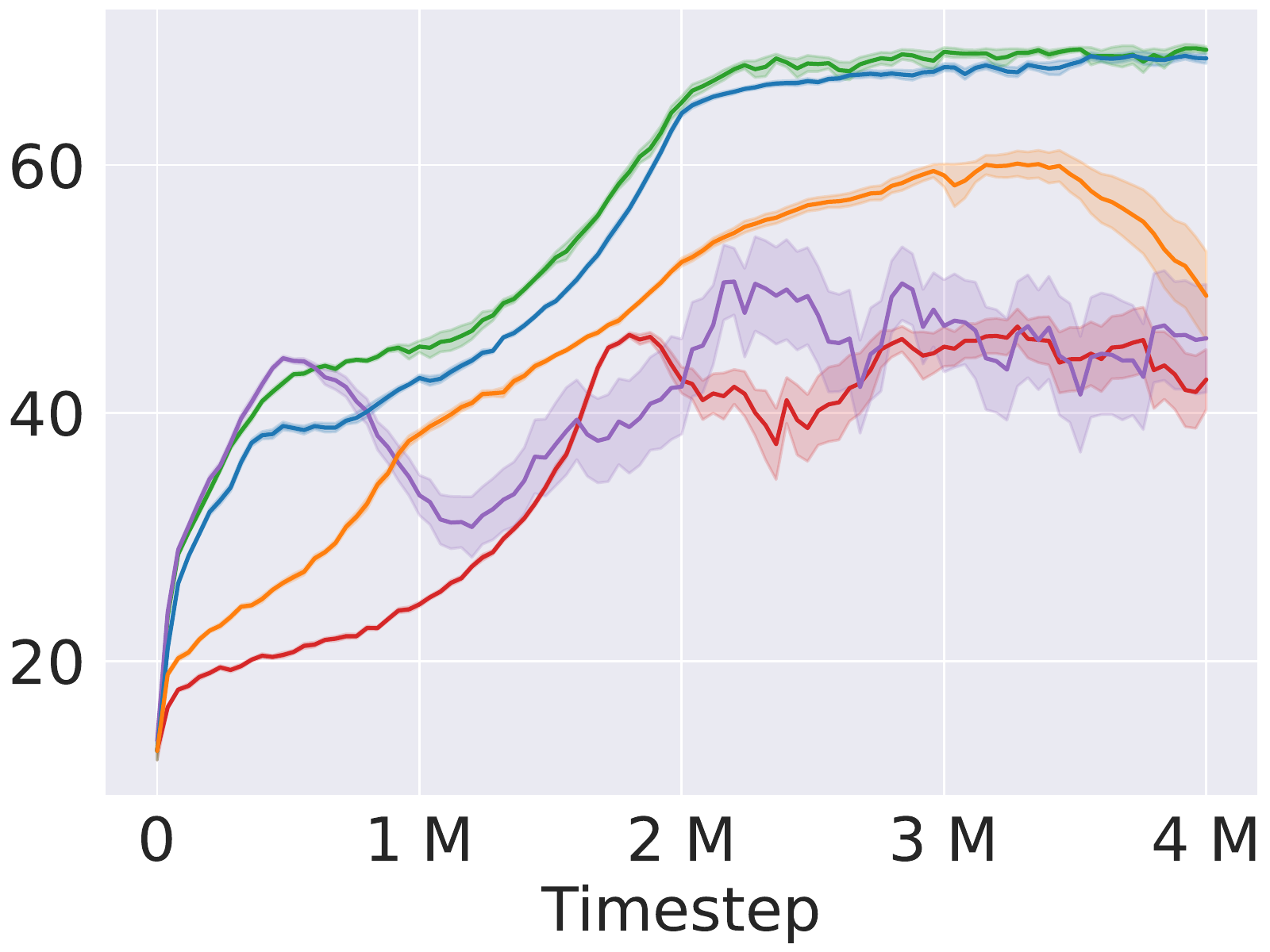}
    \caption{\cleaner}\label{fig:performance:cleaner}
  \end{subfigure}
  \begin{subfigure}{.24\linewidth}
    \centering
    \includegraphics[width=\linewidth]{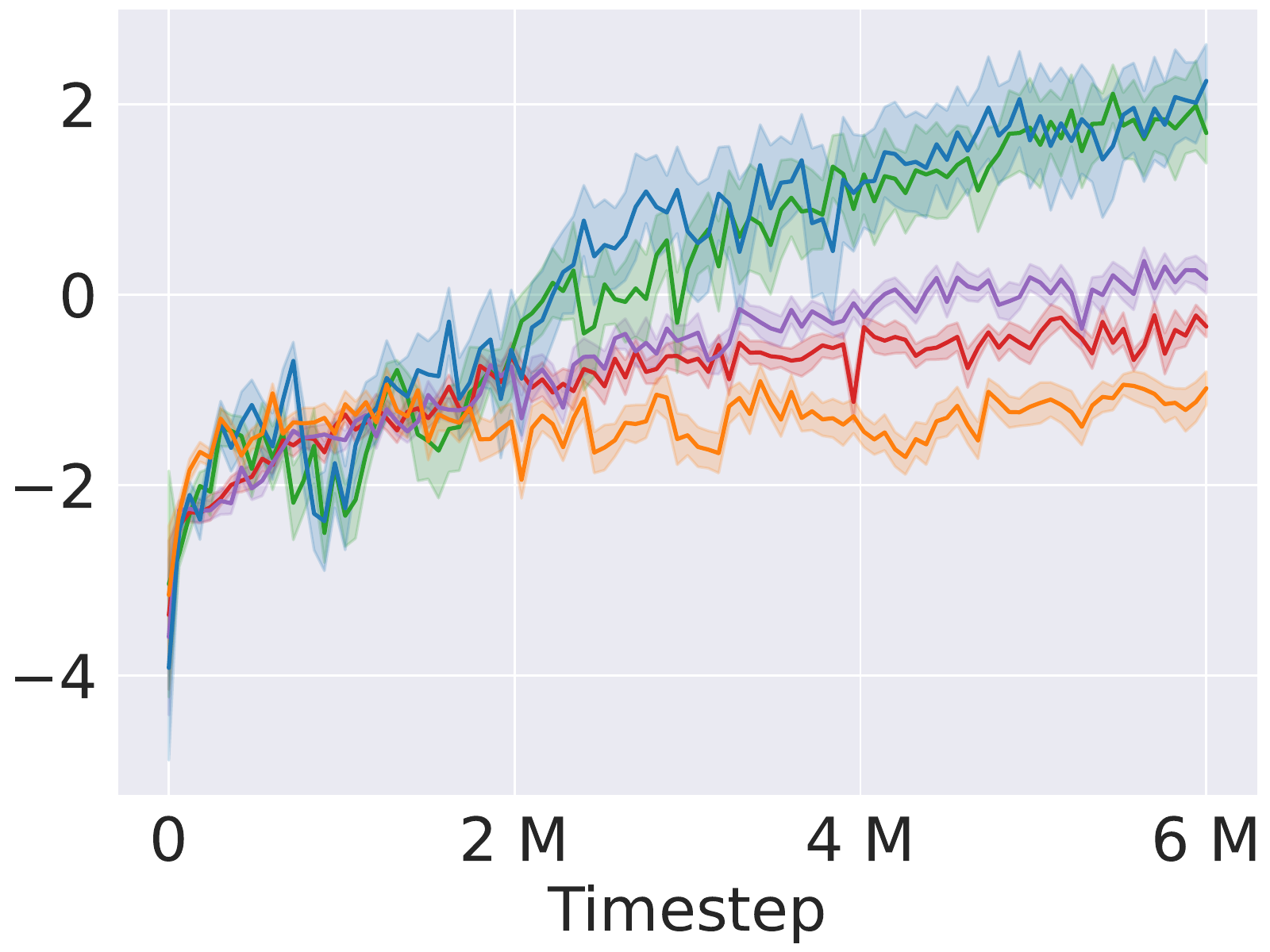}
    \caption{\memoryroomsseven}\label{fig:performance:memoryroomsseven}
  \end{subfigure}
  \begin{subfigure}{.24\linewidth}
    \centering
    \includegraphics[width=\linewidth]{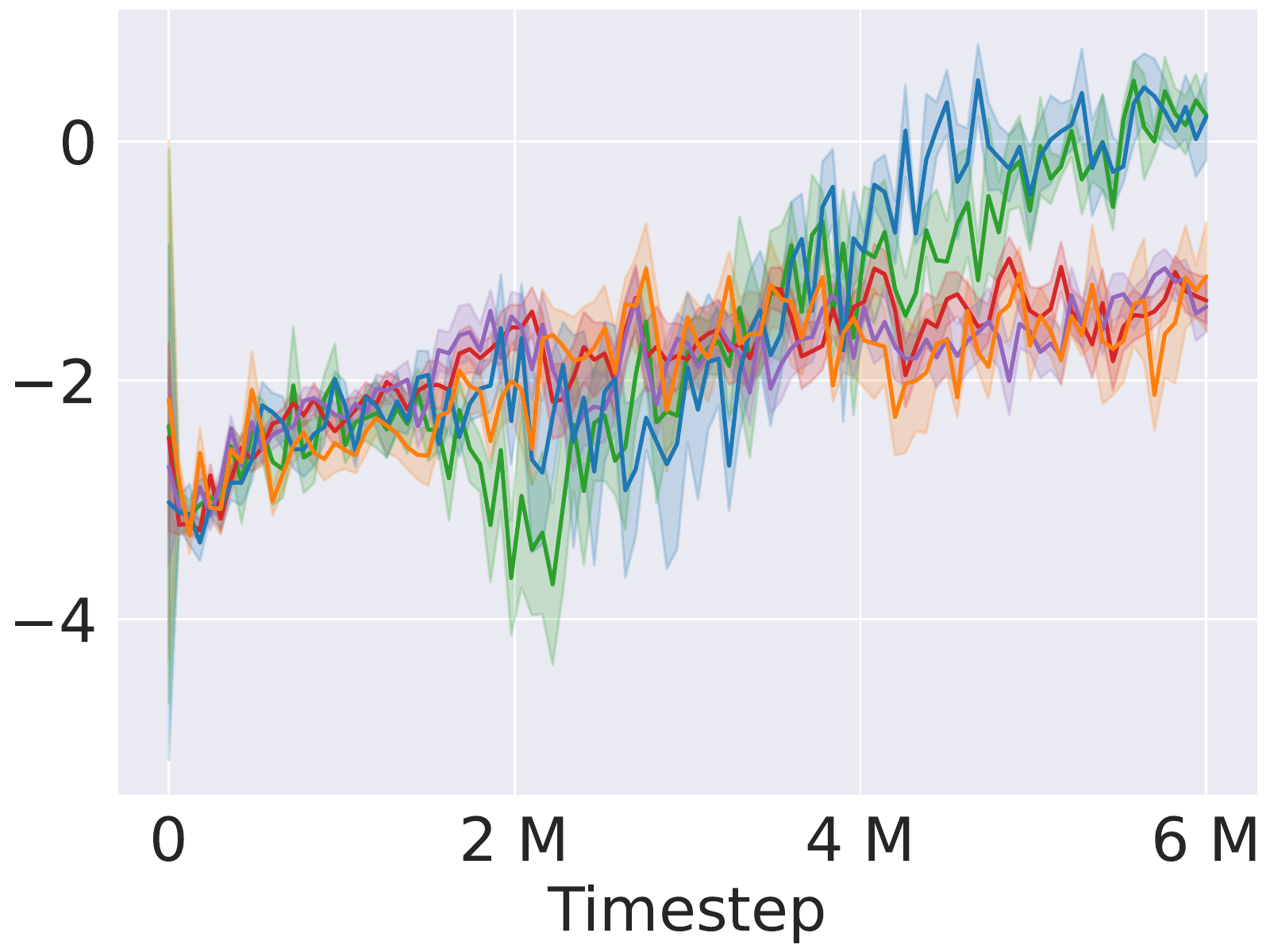}
    \caption{\memoryroomsnine}\label{fig:performance:memoryroomsnine}
  \end{subfigure}
  \caption{Learning performance curves of episodic returns averaged over the
  last $100$ episodes, with statistics computed over $20$ independent runs.
Shaded areas are centered around the empirical mean and show one standard error
of the mean.}\label{fig:performance}
\end{figure*}

\begin{algorithm}[tb]
  \caption{All methods are trained using the same algorithmic structure, just
    using different critics to compute the TD errors $\delta_t$ (see
    \Cref{eq:tderror:h,eq:tderror:s,eq:tderror:hs}). Full episodes are
    iteratively sampled and used for training.  Values $T$ and $E$ vary by
  environment.}\label{alg:code}
  \begin{algorithmic}
    \STATE {\bfseries Input:} max timestep $T$, episodes per gradient step $E$
    \WHILE{timestep $<T$}
      \STATE episodes $\gets$ sample\_episodes($\policy$, $E$)
      \STATE log\_returns(episodes)
      \STATE $\lambda \gets$ negentropy\_schedule(timestep)
      \STATE update $\pparams$ and $\cparams$ via $\nabla\left( \ploss +
      \lambda\hloss \right)$ and $\nabla\closs$
    \ENDWHILE
  \end{algorithmic}
\end{algorithm}

We compare the learning performances of five actor-critic variants.  \ach,
\acs, and \achs\ are respectively (symmetric) A2C with history critic
$\vmodel(h)$, asymmetric A2C with state critic $\vmodel(s)$, and asymmetric A2C
with history-state critic $\vmodel(h, s)$.  To demonstrate that the
environments feature significant partial observability, we include two
``quasi-reactive'' variants of (symmetric) A2C, meaning that they only receive
a fixed number of recent actions and observations.  \achrtwo\ and \achrfour\
respectively receive the latest $2$ and $4$ actions and observations.
We evaluate on $8$ navigation tasks which require different forms of
information gathering and memorization: \heavenhellthree\ and
\heavenhellfour~\cite{bonet_solving_1998,baisero2019gym-pomdps}, \shoppingfive\
and \shoppingsix~\cite{baisero2019gym-pomdps}, \carflag~\cite{nguyen2021penvs},
\cleaner~\cite{jiang_multi-agent_2021}, and \memoryroomsseven\ and
\memoryroomsnine~\cite{baisero2021gym-gridverse}; for details, see
appendix~\cite{baisero_unbiased_2022}.

Each method is trained and evaluated using the same
code\footnote{https://github.com/abaisero/asym-rlpo/} (see \Cref{alg:code}).
Model architectures vary by environment;  for more details, see
\Cref{sec:architectures}.  For each method, we perform a grid-search over
hyper-parameters of interest and select the hyper-parameter combination which
leads to the best performance (prioritizing learning stability over convergence
speed if needed); for more details, see appendix~\cite{baisero_unbiased_2022}.
Each combination of hyper-parameters is evaluated over $20$ independent runs to
guarantee statistical significance.

\subsection{Results and Discussion}

We show two relevant results from our evaluation:
\begin{enumerate*}[label=(\alph*)]
  \item in \Cref{fig:performance}, the empirical learning curve statistics, and
  \item in \Cref{fig:criticvalues}, how critic values change during training for important history-state pairs.
\end{enumerate*}

\subsubsection{Learning Curves}

We first note that the ``quasi-reactive'' baselines perform poorly in most
domains, demonstrating that these control problems feature non-trivial
partial observability which requires information gathering strategies and/or
memorization of the past.  Even in \shoppingfive, where \achrfour\ eventually
manages to reach the performance of other successful methods, its convergence
speed is significantly slower (\Cref{fig:performance:shoppingfive}).
On the other hand, the non-reactive \ach\ either performs much better,
indicating that the additional memory is useful if not necessary
(\Cref{fig:performance:shoppingfive,fig:performance:shoppingsix,fig:performance:cleaner,fig:performance:memoryroomsseven,fig:performance:memoryroomsnine}),
or it also fails, indicating that the task is still challenging even when the
entire history is available, due to representation learning difficulties
(\Cref{fig:performance:heavenhellthree,fig:performance:heavenhellfour,fig:performance:carflag}).

The \acs\ baseline displays a variety of characteristics depending on the
environment, mostly problematic.  While \acs\ managed to achieve competitive
performance in \carflag\ (\Cref{fig:performance:carflag}), in all other cases
it either completely fails to perform the task
(\Cref{fig:performance:heavenhellthree,fig:performance:heavenhellfour,fig:performance:memoryroomsseven,fig:performance:memoryroomsnine}),
or it slowly converges to a sub-optimal behavior
(\Cref{fig:performance:shoppingfive,fig:performance:shoppingsix}).  \cleaner\
in particular demonstrates instability issues, causing the performance to
collapse after a certain point (\Cref{fig:performance:cleaner}).
We argue that the poor convergence performance and learning instability
displayed by \acs\ are two facets of the theoretical issues discussed in
\Cref{sec:aa2c}.  Poor final performance may be easily explained by the
\emph{history-aliasing} issue whereby the state critic model $\vmodel(s)$ may
not be able to correctly evaluate a given history, while instability may be
easily explained by the lack of a well-defined state value function
$\vpolicy(s)$ altogether.

In contrast, our proposed unbiased asymmetric variant \achs\ displays some of
the best learning characteristics across all environments.  In \cleaner,
\memoryroomsseven, and \memoryroomsnine, its performance matches that of \ach\
(\Cref{fig:performance:cleaner,fig:performance:memoryroomsseven,fig:performance:memoryroomsnine}),
while in \carflag\ it matches that of \acs\ (\Cref{fig:performance:carflag}).
In and of itself, this indicates that \achs\ is able to exploit whichever
source of information (history or state) happens to be more suitable in
practice to solve a given task.
On top of that, \achs\ demonstrates \emph{strictly} better final performance
and/or convergence speed than both \ach\ and \acs\ in \shoppingfive\ and
\shoppingsix\
(\Cref{fig:performance:shoppingfive,fig:performance:shoppingsix}),
demonstrating that it is not only able to use the best source of information,
but also of combining both sources to achieve a higher best-of-both-worlds
performance.
This ability is pushed one step further and demonstrated in \heavenhellthree\
and \heavenhellfour, where \achs\ is the \emph{only} method capable of learning
to solve the task at all
(\Cref{fig:performance:heavenhellthree,fig:performance:heavenhellfour}).
These results strongly demonstrate the importance of exploiting asymmetric
information in ways which are theoretically justified and sound, as done in our
work.

\subsubsection{Critic Values}

\begin{figure}[t!]
  \centering
  \begin{subfigure}{.66\linewidth}
    \centering
    \includegraphics[width=\linewidth]{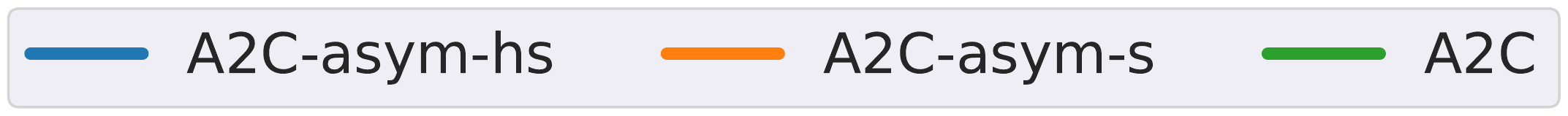}
  \end{subfigure}
  
  \begin{subfigure}{.49\linewidth}
    \centering
    \includegraphics[width=\linewidth]{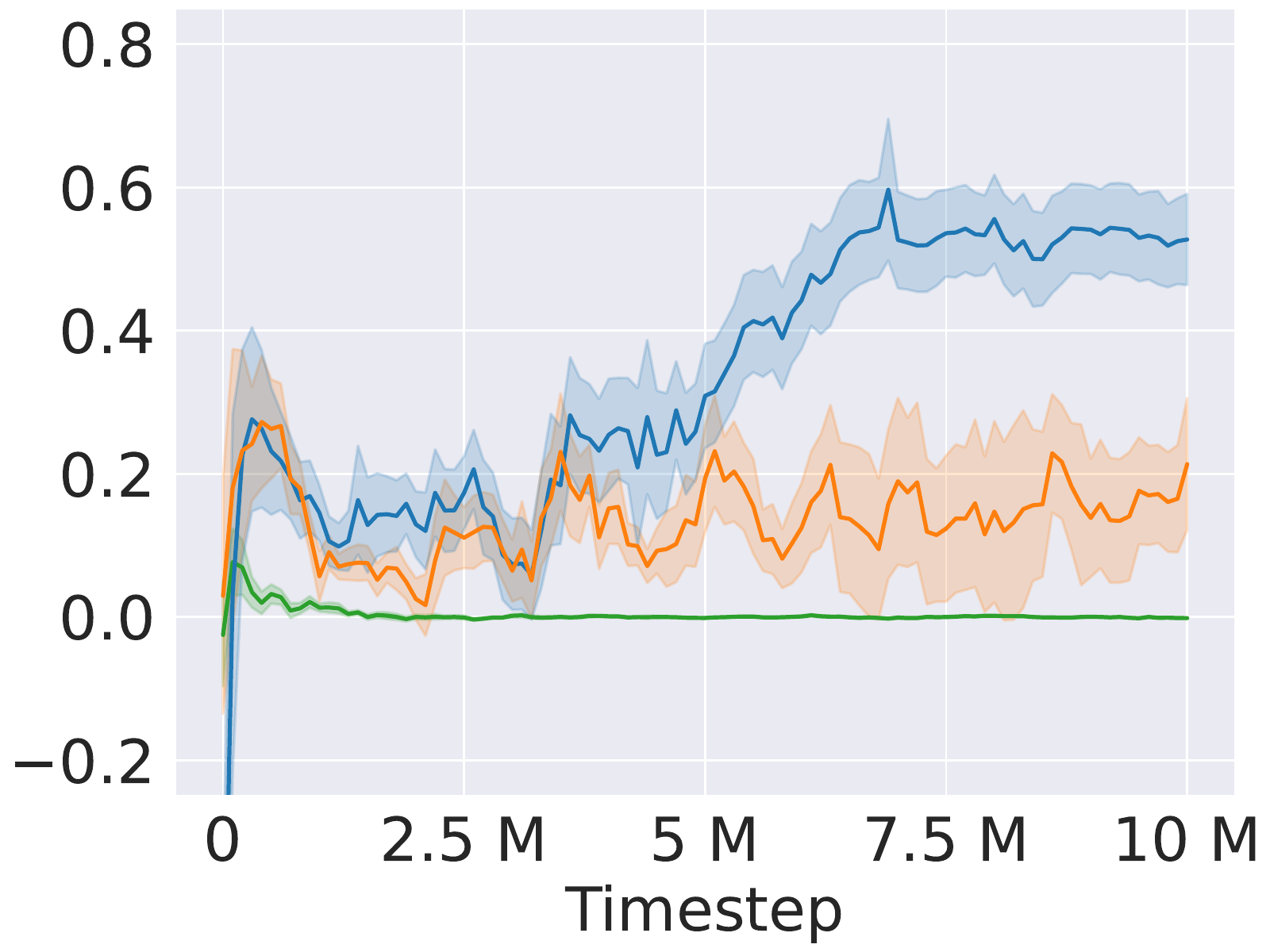}
    \caption{Heaven left, w/o priest visit.}\label{fig:critic:heaven_left_no_priest}
  \end{subfigure}
  \begin{subfigure}{.49\linewidth}
    \centering
    \includegraphics[width=\linewidth]{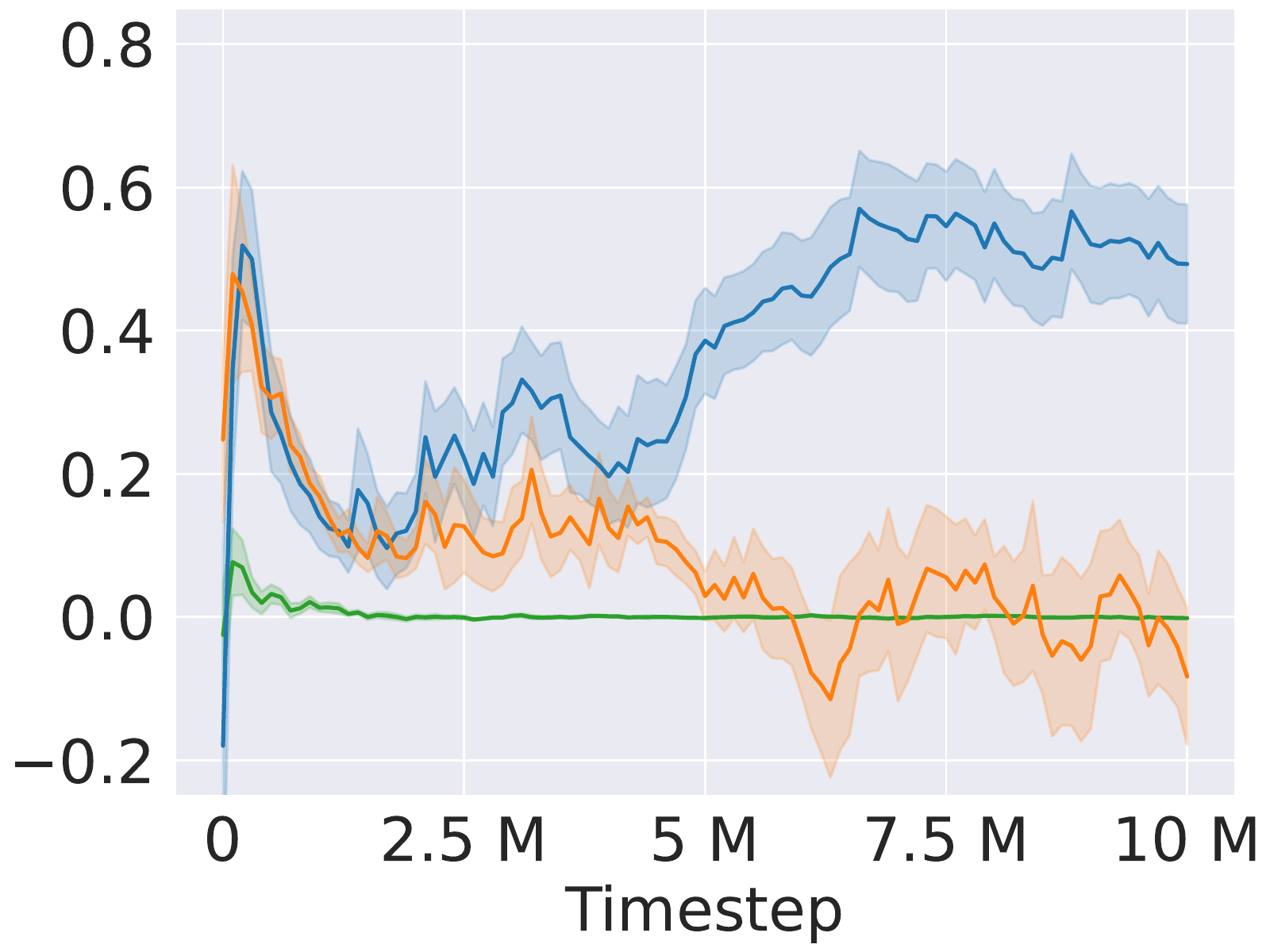}
    \caption{Heaven right, w/o priest visit.}\label{fig:critic:heaven_right_no_priest}
  \end{subfigure}

  \begin{subfigure}{.49\linewidth}
    \centering
    \includegraphics[width=\linewidth]{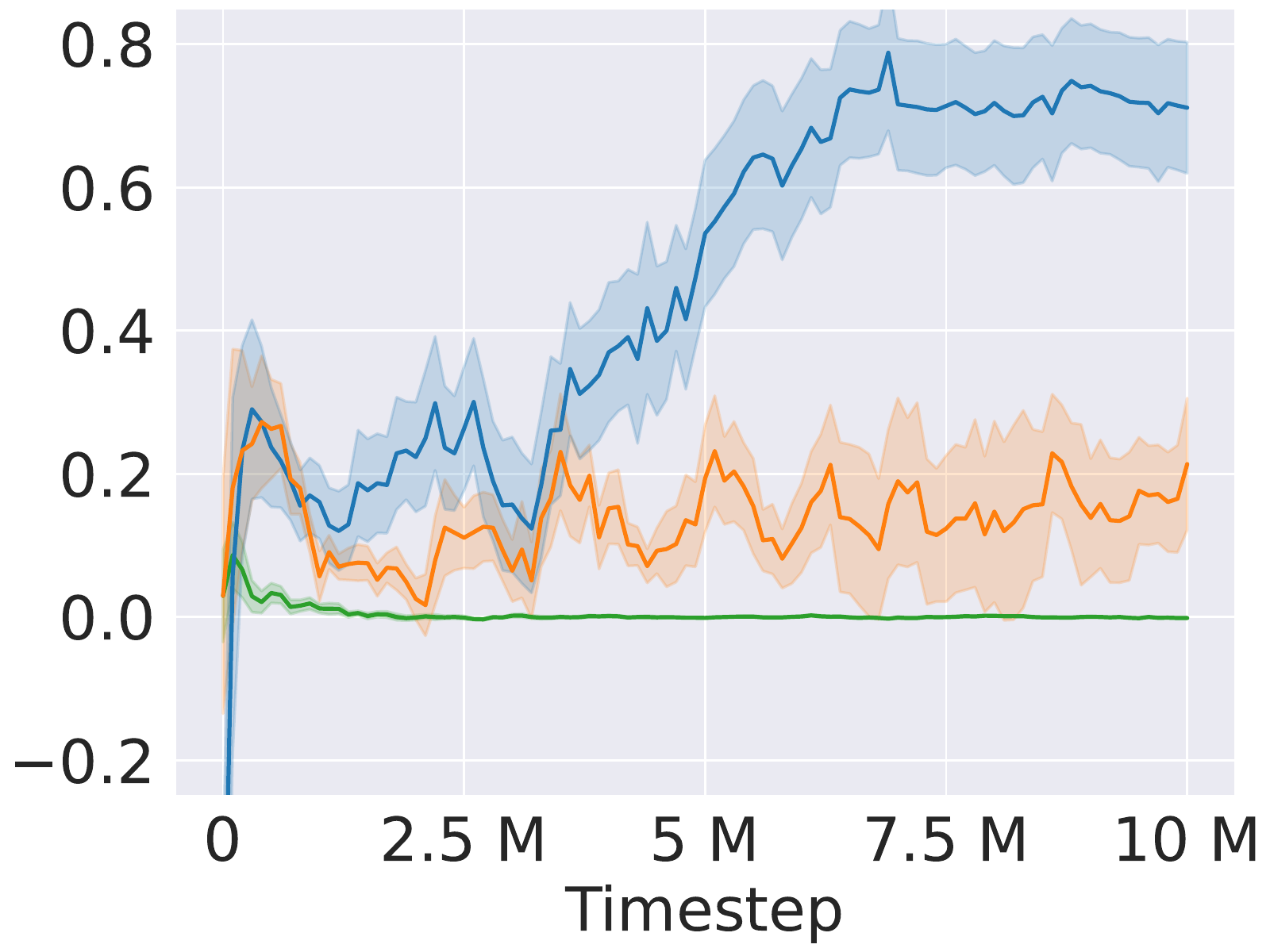}
    \caption{Heaven left, w/ priest visit.}\label{fig:critic:heaven_left_priest}
  \end{subfigure}
  \begin{subfigure}{.49\linewidth}
    \centering
    \includegraphics[width=\linewidth]{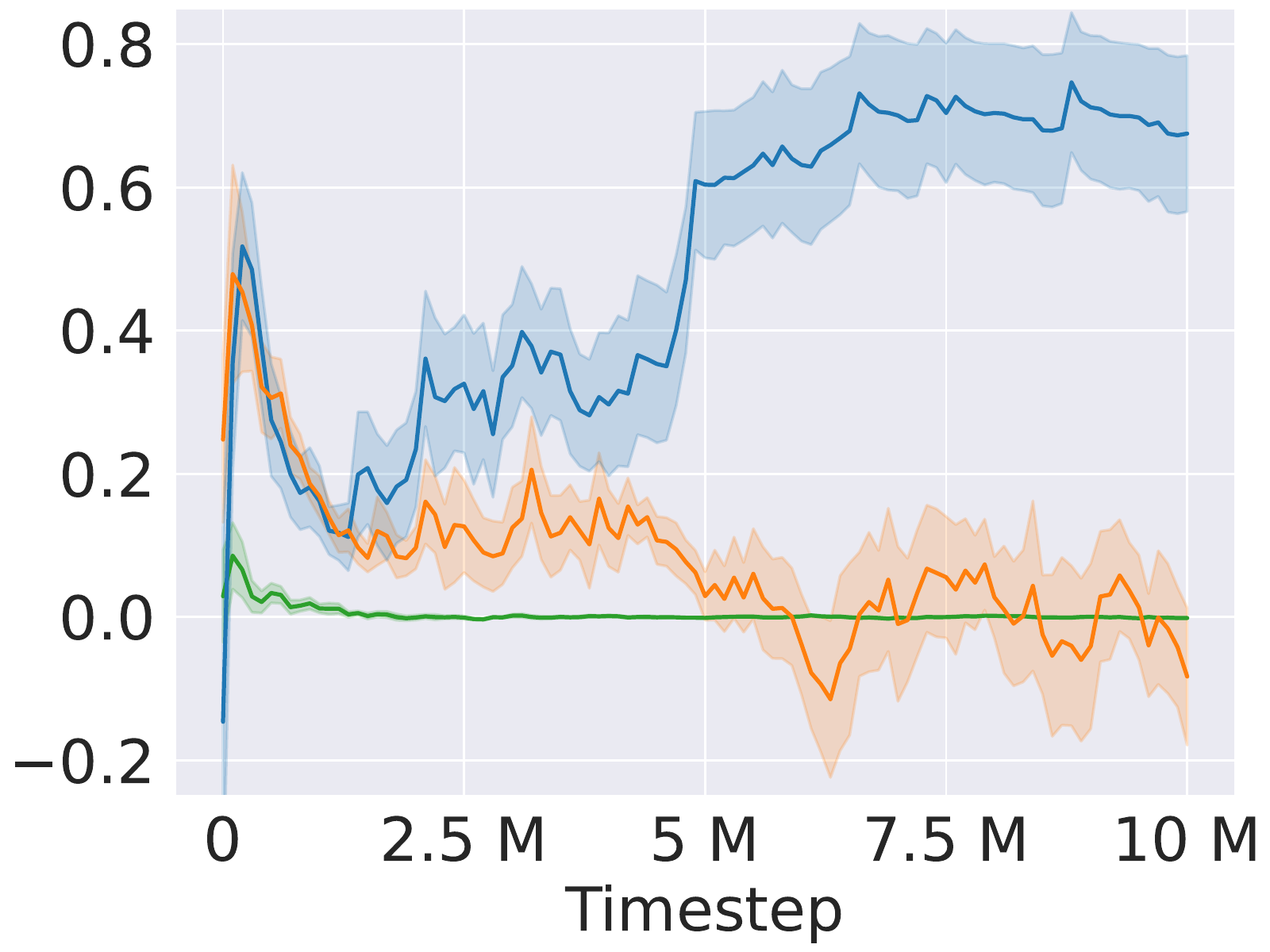}
    \caption{Heaven right, w/ priest visit.}\label{fig:critic:heaven_right_priest}
  \end{subfigure}

  \caption{Critic value statistics for $4$ key history-state pairs in
  \heavenhellfour, evaluated throughout training, with statistics computed over
$20$ independent runs.  Full description in text.}\label{fig:criticvalues}
\end{figure}

To further inspect the behavior of each critic, \Cref{fig:criticvalues} shows
the evolution of critic values over the course of training for important
history-state pairs in \heavenhellfour.  We use $4$ deliberately chosen
history-state pairs which are particularly important in this environment.  In
each case, the agent is located at the fork between \emph{heaven} and
\emph{hell}, and the cases differ by the position of \emph{heaven} (left or
right) and whether the agent has previously performed the information-gathering
sequence of actions necessary to know the position of \emph{heaven} (by
visiting the priest).

Unsurprisingly, we first note that critic values are correlated with the
respective agent's performance (\Cref{fig:performance:heavenhellfour}).  Beyond
that, the critics show certain individual characteristics:  namely, the critics
which focus on a single aspect of the join history-state output the exact same
values for different history-states.  Although hard to see, the \ach\ critic
$\vmodel(h)$ outputs are identical in
\Cref{fig:critic:heaven_left_no_priest,fig:critic:heaven_right_no_priest}, as
those values are associated with the same histories (but not the same states).
Similarly, the \acs\ critic $\vmodel(s)$ outputs are identical in
\Cref{fig:critic:heaven_left_no_priest,fig:critic:heaven_left_priest} and
\Cref{fig:critic:heaven_right_no_priest,fig:critic:heaven_right_priest}
respectively, as those values are associated with the same states (but not the
same histories).  This confirms a straightforward truth: that the state critic
$\vmodel(s)$ is intrinsically unable to differentiave between values associated
to different histories if they happen to be associated with the same state,
which can be particularly detrimental in such information-gathering and memory
dependent tasks.  On the other hand, the \achs\ critic $\vmodel(h, s)$ has the
ability to output different values, as needed, for each of the four cases.
Note, in particular, that the \achs\ critic is able to associate a higher
reward to the agent if it has already performed the information-gathering
actions (\Cref{fig:critic:heaven_left_priest,fig:critic:heaven_right_priest}),
compared to when it has not
(\Cref{fig:critic:heaven_left_no_priest,fig:critic:heaven_right_no_priest}),
which helps the agent determine that the information-gathering actions are
important and should be performed.

\section{Conclusions}

In partially observable control problems, the offline training/online execution
framework offers the peculiar opportunity to access the system's state during
training, which otherwise remains latent during execution.
Asymmetric methods trained offline can potentially exploit such privileged
information to help train the agents to reach better performance and/or train
more efficiently and using less data than before.
While this idea has great potential, current state-of-the-art methods are
motivated and driven by empirical results rather than theoretical analysis.  In
this work, we exposed fundamental theoretical issues with a standard variant of
asymmetric actor-critic which made use of state critics $\vpolicy(s)$, and
proposed an \emph{unbiased} asymmetric variant which makes use of history-state
critics $\vpolicy(h, s)$ and is the first of its kind to be analytically sound
and theoretically justified.  Although this represents a relatively simple
change, its effects are profound, as demonstrated in both theoretical analysis
and empirical results.
Our evaluations confirm our analysis, and demonstrate both the issues with
state-based critics and the benefits of history-state critics in environments
which exhibit significant partial observability.

Although our evaluation only concerns A2C, the same concepts are easily
extensible to other critic-based RL
methods~\cite{silver_deterministic_2014,lillicrap_continuous_2015,degris_off-policy_2012,mnih_asynchronous_2016}.
The potential for future work is varied.  One possibility is to extend the
theory of history-state value functions to optimal value functions $\qopt(h, s,
a)$, and develop theoretically sound asymmetric variants of value-based deep RL
methods such as \emph{DQN}~\cite{mnih_playing_2013}.  Another possibility is to
integrate asymmetric information with state-of-the-art maximum entropy
value/critic-based methods such as \emph{soft
  Q-learning}~\cite{haarnoja_reinforcement_2017}, and \emph{soft
actor-critic}~\cite{haarnoja_soft_2018}.  Finally, another venue for
improvement is to extend our theory and approach to multi-agent methods,
potentially bringing theoretical rigor and improved
performance~\cite{foerster_counterfactual_2018,lowe_multi-agent_2017,li_robust_2019,wang_r-maddpg_2020,yang_cm3_2018,rashid_qmix_2018,mahajan_maven_2019,rashid_weighted_2020}.


\begin{acks}
  This research was funded by NSF award 1816382.
\end{acks}



\balance
\bibliographystyle{ACM-Reference-Format} 
\bibliography{references}

\onecolumn
\appendix
\section{Timed Value Functions}\label{sec:special:timed}

\Cref{sec:aa2c} shows that for a general POMDP and policy, the state value
function $\vpolicy(s)$ is not necessarily well-defined, in part due to issues
caused by the lack of time information.  Here, we consider addressing the
primary issue by providing explicit time-index information via \emph{timed}
value functions, $\vpolicy_t(s)$ and $\qpolicy_t(s, a)$, which represent the
expected returns obtained when the agent finds itself in a state $s$ at time
$t$,
\begin{align}
  \vpolicy_t(s) &= \sum_{a\in\aset} \Pr(A_t=a\mid S_t=s) \qpolicy_t(s, a) \,,
  \label{eq:vts} \\
  \qpolicy_t(s, a) &= R(s, a) + \gamma \Exp_{s'\mid s, a}\left[ \vpolicy_{t+1}(s')
\right] \,. \label{eq:qtsa}
\end{align}

Once again, we analyze the state-dependent action distribution term to verify
correctness, and expand it by integrating over histories;  this time, we can
use the explicit time-index information to integrate over histories of a given
length only,
\begin{equation}
  \Pr(A_t=a\mid S_t=s) = \sum_{h\in\hset_t} \Pr(H_t=h\mid S_t=s) \policy(a; h)
  \,.  \label{eq:sumrule:timed}
\end{equation}

Because \Cref{eq:sumrule:timed} is now restricted to histories of a given
length $t$, the probability term $\Pr(H_t=h\mid S_t=s)$ is well-defined, and in
turn $\Pr(A_t=a\mid S_t=s)$ and $\vpolicy_t(s)$ are likewise well-defined.
Nevertheless, its utility for the purpose of asymmetric reinforcement learning
remains unclear because
\begin{enumerate*}[label=(\alph*)]
  \item it is still not formally proven whether timed value functions are
    unbiased, i.e., whether $\vpolicy_t(h) = \Exp_{s\mid h}\left[ \vpolicy_t(s)
    \right]$, and
  \item it is harder for timed value critics $\vmodel(s, t)$ to generalize
    appropriately across the additional discrete input $t$.
\end{enumerate*}

\section{Additional Lemmas and Proofs}\label{sec:proofs}

\subsection{Differing Histories with Shared Beliefs}

The main proof of \Cref{thm:vs:bias} is based the commonly-known fact (which
could be considered a lemma in its own right) that, in a generic POMDP, two
different histories $h'\neq h''$ may be associated with the same belief $b(h')
= b(h'')$.  This section contains some examples where this happens in some of
the environments used in our evaluation.  To better understand the
environments, and therefore the following examples, see
\Cref{sec:environments}.

\paragraph{Example 1} In Memory-Four-Rooms, consider the following histories:
\begin{itemize}
  \item The agent turns left 4 times until it reaches the initial orientation.
  \item The agent turns right 4 times until it reaches the initial orientation.
  \item The agent turns any direction any number of times such that it ends up
    viewing all possible orientations, and finally reaches the initial
    orientation.
\end{itemize}
In each case, the agent has received the same amount of total information, just
in a different order, which results in the same belief.

\paragraph{Example 2} In Heaven-Hell, consider any sequence of actions which
does not reach an exit, \emph{does not} result in visiting the priest, and
ends with the agent occupying the same final position.  At the end of any such
sequence, the agent has full knowledge of its position, and no additional
knowledge of the position of the door to heaven, i.e., all such sequences
result in the same final belief.

\paragraph{Example 3} In Heaven-Hell, consider any sequence of actions which
does not reach an exit, \emph{does} result in visiting the priest, and ends
with the agent occupying the same final position.  At the end of any such
sequence, the agent has full knowledge of both its position and the position of
the door to heaven, i.e., all such sequences result in the same final belief.

\subsection{Additional Proofs for \Cref{thm:vs:bias}}

This section contains two additional proofs (one sketch and one formal) omitted
from the main body of the document for the case of a reactive policy
(\Cref{sec:special:reactive:po}).  These two proofs complement the more general
one of \Cref{thm:vs:bias}, and take into account the assumptions of a reactive
policy and an observation function which depends only on the current state.

\begin{proof}[Proof (sketch) by contradiction]
  First, we assume that $\vpolicy(s)$ is unbiased and show that $\qpolicy(s,a)$
  (as defined by \Cref{eq:qsa}) is unbiased,
  \begin{align}
    \Exp_{s\mid h}\left[ \qpolicy(s, a) \right] &= \Exp_{s\mid h}\left[
    \rfn(s, a) + \gamma \Exp_{s'\mid s, a}\left[ \vpolicy(s') \right] \right] \nonumber \\
    &= \Exp_{s\mid h}\left[ \rfn(s, a) \right] + \gamma \Exp_{s\mid h}\left[
  \Exp_{s'\mid s, a}\left[ \vpolicy(s') \right] \right] \nonumber \\
    &= \Exp_{s\mid h}\left[ \rfn(s, a) \right] + \gamma \Exp_{s'\mid h,
    a}\left[ \vpolicy(s') \right] \nonumber \\
    &= \Exp_{s\mid h}\left[ \rfn(s, a) \right] + \gamma \Exp_{o\mid h, a}
    \Exp_{s'\mid hao}\left[ \vpolicy(s') \right] \nonumber \\
    &= \rfn(h, a) + \gamma \Exp_{o\mid h, a}\left[ \vpolicy(hao) \right]
    \nonumber \\
    &= \qpolicy(h, a) \,.
  \end{align}
  Next, we show that even if $\qpolicy(s, a)$ is unbiased, $\vpolicy(s)$ (as
  defined by \Cref{eq:vs}) is biased, which contradicts the original
  assumption.  To do that, we expand 
  the expected state value function $\Exp_{s\mid h}\left[ \vpolicy(s) \right]$
  and the history value function $\vpolicy(h)$, and show that there is a
  concrete difference between them:
  \begin{align}
    \Exp_{s\mid h}\left[ \vpolicy(s) \right] &= \Exp_{s\mid h}\left[
    \sum_{a\in\aset} \Pr(a\mid s) \qpolicy(s, a) \right] \nonumber \\
    &= \Exp_{s\mid h}\left[ \sum_{a\in\aset} \Exp_{o\mid s}\left[
    \pi(a; o) \right] \qpolicy(s, a) \right] \,, \label{eq:proof:vs} \\
    \vpolicy(h) &= \sum_{a\in\aset} \pi(a; o_h) \qpolicy(h, a) \nonumber \\
    &= \sum_{a\in\aset} \pi(a; o_h) \Exp_{s\mid h}\left[ \qpolicy(s, a)
    \right] \nonumber \\
    &= \Exp_{s\mid h}\left[ \sum_{a\in\aset} \Exp_{o\mid s}\left[ \pi(a; o_h)
    \right] \qpolicy(s, a) \right] \,. \label{eq:proof:vh}
  \end{align}
  \Cref{eq:proof:vs,eq:proof:vh} differ in terms of which observation is used
  by the policy;  in \Cref{eq:proof:vs}, an observation $o$ \emph{inferred}
  from a state $s$ \emph{inferred} from the history $h$ is used, while in
  \Cref{eq:proof:vh} the final observation $o_h$ of the history $h$ is used.
  These two observations $o$ and $o_h$ are not generally the same, and the
  respective expectations are similarly not generally the same.
  The nested expectation in \Cref{eq:proof:vs} can be interpreted as a
  \emph{lossy} round-trip inference from history to state and from state back to
  observation $h\to s\to o$.  Although histories and states tend to be somewhat
  correlated, both state aliasing and history aliasing make the roundtip
  conversion imperfect, causing a mismatch between the expected state value
  function $\Exp_{s\mid h}\left[ \vpolicy(s) \right]$ and the history value
  function $\vpolicy(h)$ in the general control case of a general POMDP.
\end{proof}

\begin{proof}[Proof by example]
  This is a proof by example (with a proof by contradiction element).  We will
  define the \emph{good/bad} POMDP and, for a specific policy and history,
  first calculate $\Exp_{s\mid h}\left[ \vpolicy(s) \right]$ exactly, and then
  $\vpolicy(h)$ using bootstrapping (while also assuming $\vpolicy(hao) =
  \Exp_{s'\mid hao}\left[ \vpolicy(s') \right]$).  We show that the two values
  are numerically different.

  In the \emph{good/bad} POMDP, $\sset = \{\good, \bad \}, \aset = \{ \good,
  \bad \}$, $\oset = \{ \good, \bad \}$;  At times, we will use the shorthands
  $\GGG$ and $\B$.  The initial state distribution is uniform, and each state
  deterministically transitions into itself.  The \good\ state always emits the
  \good\ observation, while the \bad\ state emits a random observation.
  Consider the reward function such that $R(s, a) = \Ind\left[ a = \good
  \right]$, i.e., the agent receives a reward whenever it choses the \good\
  action.
  We will denote a history as the concatenation of alternating observations and
  actions, starting with an observation.  To keep the notation compact, we will
  occasionally use symbols \GGG\ and \B\ to represent \good\ and \bad\ states,
  observations and actions.
  Consider a deterministic policy $\pi(a; h) = \Ind\left[ a = o_h \right]$
  which returns the action corresponding to the last observation.  Note that
  this POMDP and this policy satisfy the requirements to guarantee that
  $\vpolicy(s)$ is well defined.

  Next, we calculate the state values $\vpolicy(s)$.  The $\good$ state always emits
  the $\good$ observation, so the agent will always choose the $\good$ action
  and receive a reward of $1$, then the state will always transition into
  itself,
  \begin{align}
    \vpolicy(s=\good) &= 1 + \gamma \vpolicy(s=\good) \nonumber \\
    &= \frac{1}{1 - \gamma} \,.
  \end{align}

  On the other hand, the $\bad$ state will only emit the $\good$ observation
  half of the times, so the agent will only choose the $\good$ action and
  receive a reward of $1$ half of the times, then the state will always
  transition into itself,
  \begin{align}
    \vpolicy(s=\bad) &= \frac{1}{2} + \gamma \vpolicy(s=\bad) \nonumber \\
    &= \frac{1}{2(1 - \gamma)} \,.
  \end{align}

  Next, we consider the history $h=\GGG$ after a single initial \good\
  observation, and calculate the history value $\vpolicy(h)$.  Before proceeding,
  we need to calculate a few intermediate quantities, such as the
  belief-distribution:
  \begin{align}
    \Pr(s=\GGG\mid h=\GGG) &\propto \Pr(h=\GGG\mid s=\GGG) \Pr(s=\GGG) \nonumber \\
    &= 1 \frac{1}{2} \nonumber \\
    &= \frac{1}{2} \,, \\
    \Pr(s=\B\mid h=\GGG) &\propto \Pr(h=\GGG\mid s=\B) \Pr(s=\B) \nonumber \\
    &= \frac{1}{2} \frac{1}{2} \nonumber \\
    &= \frac{1}{4} \,, \\
    \intertext{therefore}
    \Pr(s=\GGG\mid h=\GGG) &= \frac{2}{3} \,, \\
    \Pr(s=\B\mid h=\GGG) &= \frac{1}{3} \,.
  \end{align}

  We also calculate the belief-state distribution after two other histories.
  First $h=\GGG\GGG\GGG$,
  \begin{align}
    \Pr(s=\GGG\mid h=\GGG\GGG\GGG) &\propto \Pr(h=\GGG\GGG\GGG\mid s=\GGG) \Pr(s=\GGG) \nonumber \\
    &= 1 \frac{1}{2} \nonumber \\
    &= \frac{1}{2} \,, \\
    \Pr(s=\B\mid h=\GGG\GGG\GGG) &\propto \Pr(h=\GGG\GGG\GGG\mid s=\B) \Pr(s=\B) \nonumber \\
    &= \frac{1}{4} \frac{1}{2} \nonumber \\
    &= \frac{1}{8} \,, \\
    \intertext{therefore}
    \Pr(s=\GGG\mid h=\GGG\GGG\GGG) &= \frac{4}{5} \,, \\
    \Pr(s=\B\mid h=\GGG\GGG\GGG) &= \frac{1}{5} \,.
  \end{align}
  Then $h=\GGG\GGG\B$,
  \begin{align}
    \Pr(s=\GGG\mid h=\GGG\GGG\B) &\propto \Pr(h=\GGG\GGG\B\mid s=\GGG) \Pr(s=\GGG) \nonumber \\
    &= 0 \frac{1}{2} \nonumber \\
    &= 0 \,, \\
    \Pr(s=\B\mid h=\GGG\GGG\B) &\propto \Pr(h=\GGG\GGG\B\mid s=\B) \Pr(s=\B) \nonumber \\
    &= \frac{1}{4} \frac{1}{2} \nonumber \\
    &= \frac{1}{8} \,, \\
    \intertext{therefore}
    \Pr(s=\GGG\mid h=\GGG\GGG\B) &= 0 \,, \\
    \Pr(s=\B\mid h=\GGG\GGG\B) &= 1 \,.
  \end{align}

  We also need to calculate the observation emission probabilities,
  \begin{align}
    \Pr(o=\GGG\mid h=\GGG, a=\GGG) &= \Pr(s=\GGG\mid h=\GGG) \Pr(o=\GGG\mid s=\GGG) \nonumber \\
    &\hphantom{{}={}} + \Pr(s=\B\mid h=\GGG) \Pr(o=\GGG\mid s=\B) \nonumber \\
    &= \frac{2}{3} 1 + \frac{1}{3} \frac{1}{2} \nonumber \\
    &= \frac{5}{6} \,, \\
    \Pr(o=\B\mid h=\GGG, a=\GGG) &= \Pr(s=\GGG\mid h=\GGG) \Pr(o=\B\mid s=\GGG) \nonumber \\
    &\hphantom{{}={}} + \Pr(s=\B\mid h=\GGG) \Pr(o=\B\mid s=\B) \nonumber \\
    &= \frac{2}{3} 0 + \frac{1}{3} \frac{1}{2} \nonumber \\
    &= \frac{1}{6} \,.
  \end{align}

  Next, we calculate $\vpolicy(h=\GGG)$ under the assumption that the equality
  holds,
  \begin{align}
    \vpolicy(h=\GGG) &= \Exp_{s\mid h=\GGG}\left[ \vpolicy(s) \right] \nonumber \\
    &= \Pr(s=\GGG\mid h=\GGG) \vpolicy(s=\GGG) + \Pr(s=\B\mid h=\GGG) \vpolicy(s=\B) \nonumber \\
    &= \frac{2}{3} \frac{1}{1-\gamma} + \frac{1}{3} \frac{1}{2(1-\gamma)} \nonumber \\
    &= \frac{5}{6(1-\gamma)} \label{eq:proof:value:1} \,.
  \end{align}

  We can also apply the equality to other histories,
  \begin{align}
    \vpolicy(h=\GGG\GGG\GGG) &= \Exp_{s\mid h=\GGG\GGG\GGG}\left[ \vpolicy(s) \right] \nonumber \\
    &= \Pr(s=\GGG\mid h=\GGG\GGG\GGG) \vpolicy(s=\GGG) + \Pr(s=\B\mid h=\GGG\GGG\GGG)
    \vpolicy(s=\B) \nonumber \\
    &= \frac{4}{5} \frac{1}{1-\gamma} + \frac{1}{5} \frac{1}{2(1-\gamma)} \nonumber \\
    &= \frac{9}{10(1-\gamma)} \,, \\
    \vpolicy(h=\GGG\GGG\B) &= \Exp_{s\mid h=\GGG\GGG\B}\left[ \vpolicy(s) \right] \nonumber \\
    &= \Pr(s=\GGG\mid h=\GGG\GGG\B) \vpolicy(s=\GGG) + \Pr(s=\B\mid h=\GGG\GGG\B)
    \vpolicy(s=\B) \nonumber \\
    &= 0 \frac{1}{1-\gamma} + 1 \frac{1}{2(1-\gamma)} \nonumber \\
    &= \frac{1}{2(1-\gamma)} \,.
  \end{align}

  Next, we calculate $\vpolicy(h=\GGG)$, this time by bootstrapping first, and
  then using the equality.  Note that with the given history $h=\GGG$, the agent
  will choose action $a=\good$.  Then,
  \begin{align}
    \vpolicy(h=\GGG) &= R(h=\GGG, a=\GGG) + \gamma \Exp_{o\mid h=\GGG, a=\GGG}\left[
    \vpolicy(hao=\GGG\GGG o) \right] \nonumber \\
    &= 1 + \gamma \left(
      \begin{aligned}
        \Pr(o=\GGG\mid h=\GGG, a=\GGG) \vpolicy(hao=\GGG\GGG\GGG) \\
        + \Pr(o=\B\mid h=\GGG, a=\GGG) \vpolicy(hao=\GGG\GGG\B)
      \end{aligned} \right) \nonumber \\
    &= 1 + \gamma \frac{5}{6} \frac{9}{10(1-\gamma)} + \gamma \frac{1}{6}
    \frac{1}{2(1-\gamma)} \nonumber \\
    &= \frac{60-60\gamma}{60(1-\gamma)} + \frac{45\gamma}{60(1-\gamma)} + 
    \frac{5\gamma}{60(1-\gamma)} \nonumber \\
    &= \frac{60-10\gamma}{60(1-\gamma)} \nonumber \\
    &= \frac{6-\gamma}{6(1-\gamma)} \,. \label{eq:proof:value:2}
  \end{align}
  The values from \Cref{eq:proof:value:1,eq:proof:value:2} contradict
  each other, therefore, for this POMDP, policy, and history, $\vpolicy(h) \neq
  \Exp_{s\mid h}\left[ \vpolicy(s) \right]$.
\end{proof}

\section{Environments}\label{sec:environments}

In this section, we present a detailed description of each control problem.
While all problems are controlled by categorical actions, they can be split
into three groups based on the types of state and observation representations
provided to the agent:
\begin{itemize}
  \item \heavenhellthree, \heavenhellfour, \shoppingfive, and \shoppingsix\ are
    \emph{categorical} POMDPs,
  \item \carflag\ and \cleaner\ are \emph{feature-vector} POMDPs,
  \item \memoryroomsseven\ and \memoryroomsnine\ are \emph{gridverse} POMDPs.
\end{itemize}

\subsection{Categorical POMDPs}

\newcommand\figfactor{.25}
\newcommand\subfigfactor{1.0}
\begin{figure}[t!]
  \centering

  \begin{subfigure}{\figfactor\linewidth}
    \centering
    \fbox{\resizebox{\subfigfactor\linewidth}{!}{\begin{tikzpicture}

  \tikzset{wall/.style={ultra thick}}
  \tikzset{edge/.style={->, very thick, shorten >=2pt}}

  \draw[step=1.0, wall] (-3,0) grid (4,1);
  \draw[step=1.0, wall] (0,1) grid (1,-3);
  \draw[step=1.0, wall] (0,-3) grid (4,-4);

  \node at (3.5, 0.5) {Exit};
  \node at (-2.5, 0.5) {Exit};

  \node at (0.5, -2.5) {Agent};
  \node at (3.5, -3.5) {Priest};

\end{tikzpicture}}}
    \caption{\heavenhellthree}
  \end{subfigure}
  \qquad\qquad
  \begin{subfigure}{\figfactor\linewidth}
    \centering
    \fbox{\resizebox{\subfigfactor\linewidth}{!}{\begin{tikzpicture}

  \tikzset{wall/.style={ultra thick}}
  \tikzset{edge/.style={->, very thick, shorten >=2pt}}

  \draw[step=1.0, wall] (-4,0) grid (5,1);
  \draw[step=1.0, wall] (0,1) grid (1,-4);
  \draw[step=1.0, wall] (0,-4) grid (5,-5);

  \node at (4.5, 0.5) {Exit};
  \node at (-3.5, 0.5) {Exit};

  \node at (0.5, -3.5) {Agent};
  \node at (4.5, -4.5) {Priest};

\end{tikzpicture}}}
    \caption{\heavenhellfour}
  \end{subfigure}

  \caption{Layout of the Heaven-Hell environments.}\label{fig:map:heavenhell}
\end{figure}
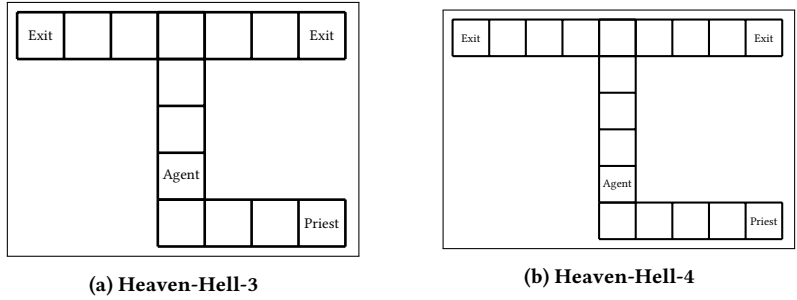

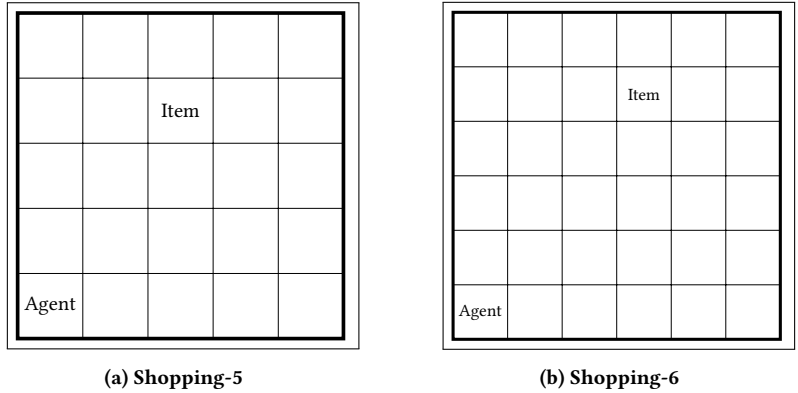
\begin{figure}[t!]
  \centering

  \begin{subfigure}{\figfactor\linewidth}
    \centering
    \fbox{\resizebox{\subfigfactor\linewidth}{!}{\begin{tikzpicture}

  \tikzset{wall/.style={ultra thick}}

  \draw[step=1.0] (0,0) grid (5,5);
  \draw[wall] (0,0) rectangle (5,5);

  \node at (0.5, 0.5) {Agent};
  \node at (2.5, 3.5) {Item};

\end{tikzpicture}}}
    \caption{\shoppingfive}
  \end{subfigure}
  \qquad\qquad
  \begin{subfigure}{\figfactor\linewidth}
    \centering
    \fbox{\resizebox{\subfigfactor\linewidth}{!}{\begin{tikzpicture}

  \tikzset{wall/.style={ultra thick}}

  \draw[step=1.0] (0,0) grid (6,6);
  \draw[wall] (0,0) rectangle (6,6);

  \node at (0.5, 0.5) {Agent};
  \node at (3.5, 4.5) {Item};

\end{tikzpicture}}}
    \caption{\shoppingsix}
  \end{subfigure}

  \caption{Layout of the Shopping environments.}\label{fig:map:shopping}
\end{figure}

The implementation of the categorical POMDPs (and their POMDP files) can be
found in~\cite{baisero2019gym-pomdps}.
In the \emph{categorical} POMDPs, states, actions and observations are all
encoded by categorical indices which have no inherent metric, and which are
intrinsically equally (dis)similar to each other.
While it is not possible to generalize across states and observations via
feature extraction, the primary challenge in these POMDPs is that of
generalizing across different histories.
Because the categorical POMDPs are finite, their state, action and observation
spaces have well-defined sizes, shown in \Cref{tab:environments};  note,
however, that the size of the state space is not a significant measure of the
complexity of partially observable tasks, while the time required to solve the
task (i.e., history length) is a more relevant measure.  While these types of
POMDPs often do not look as impressive as others, due to the unimpressive
categorical representations, they pose unique learning challenges which tend to
focus on the core of decision making, rather than mere feature extraction.

\begin{table}
  \centering
  \caption{Categorical environment properties.}\label{tab:environments}
  \begin{tabular}{lrccc}
    \toprule
    Domain & $|\sset|$ & $|\aset|$ & $|\oset|$ & $\gamma$ \\
    \midrule
    \shoppingfive & 625 & 6 & 50 & 0.99 \\
    \shoppingsix & 1296 & 6 & 72 & 0.99 \\
    \heavenhellthree & 28 & 4 & 15 & 0.99 \\
    \heavenhellfour & 36 & 4 & 19 & 0.99 \\
    \bottomrule
  \end{tabular}
\end{table}

\subsubsection{Heaven-Hell}

In \heavenhell~\cite{bonet_solving_1998}, the agent navigates a corridor-like
gridworld composed of a fork and $3$ dead-ends.  Two dead-ends are exits which
lead to \emph{heaven} or \emph{hell}, although the agent does not know which is
which, while the third dead-end leads to a \emph{priest} who can help the agent
identify the \emph{heaven} exit.
\Cref{fig:map:heavenhell} depicts the gridworlds encoded by \heavenhellthree\
and \heavenhellfour.

\paragraph{States and Observations}
States encode the position of the agent \emph{and} the position of the exit to
heaven.  Observations encode the position of the agent \emph{or} the
position of the exit to heaven.

\paragraph{Actions}
Each time-step, the agent must choose an action from the set \{ \textbf{NORTH},
\textbf{SOUTH}, \textbf{EAST}, \textbf{WEST} \}.  If the agent is at the
priest, it observes \emph{heaven's} location, otherwise it observes its own
position.
To solve the task, the agent needs to \emph{navigate} to the priest, then back
to the \emph{fork}, and on to \emph{heaven}.

\paragraph{Rewards}
The agent receives a sparse reward signal composed of:
\begin{itemize}
  \item a reward of $1.0$ for exiting to \emph{heaven}; and
  \item a reward of $-1.0$ for exiting to \emph{hell}.
\end{itemize}

\subsubsection{Shopping}

\shopping\ simulates an agent going to a shop to buy an item it forgot.
The agent navigates a $5\times 5$ or $6\times 6$ gridworld trying to
\emph{locate} and \emph{select} a randomly positioned item.  The agent's
position is fully observable, while the item's position is only observed when
\emph{queried}.
\Cref{fig:map:shopping} depicts the gridworlds encoded by \shoppingfive\ and
\shoppingsix.

\paragraph{States and Observations}
States encode the position of the agent \emph{and} the position of the item in a
single integer. Observations encode the position of the agent \emph{or}
the position of the item in a single integer.

\paragraph{Actions}
Each time-step, the agent must choose an action from the set \{ \textbf{LEFT},
\textbf{RIGHT}, \textbf{UP}, \textbf{DOWN}, \textbf{QUERY}, \textbf{BUY} \}.
If the agent chooses the \textbf{QUERY} action, it observes the item's position,
otherwise it observes its own position.
To solve the task optimally, the agents needs to \emph{query} the item's
position and remember it, \emph{navigate} to it, and then \emph{buy} it.

\paragraph{Rewards}
The agent receives the following reward signal:
\begin{itemize}
  \item a reward of $-1.0$ for moving;
  \item a reward of $-2.0$ for performing a \textbf{QUERY} action;
  \item a reward of $-5.0$ for performing a \textbf{BUY} action in the wrong cell; and
  \item a reward of $10.0$ for performing a \textbf{BUY} action in the correct cell.
\end{itemize}

\subsection{Feature-Vector POMDPs}\label{sec:environments:extra}

In the \emph{feature-vector} POMDPs, states and observations are provided as
concise feature vectors where each dimension represents a particular aspect of
the state/observation, e.g., in navigation tasks, one dimension could represent
the horizontal position of the agent.  Typically, the observation feature
vectors are obtained by dropping selected dimensions from the state feature
vector.

\begin{figure}[h!]
  \centering

  \begin{subfigure}{.3\linewidth}
    \centering
    \fbox{\includegraphics[width=\linewidth]{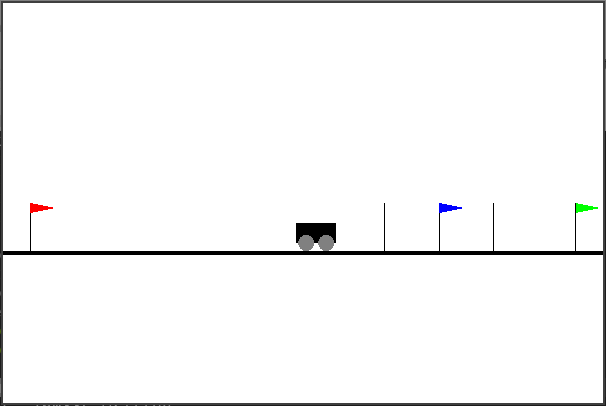}}
    \caption{\carflag}\label{fig:screenshot:carflag}
  \end{subfigure}
  \quad\quad
  \begin{subfigure}{.2\linewidth}
    \centering
    \fbox{\includegraphics[width=\linewidth]{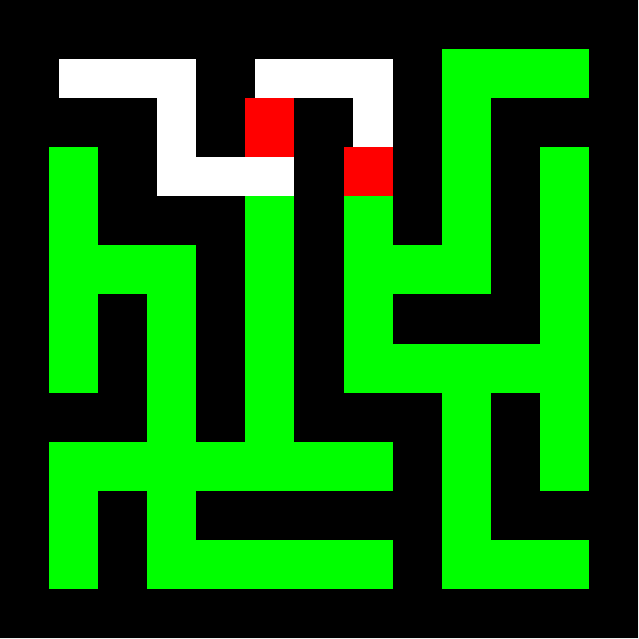}}
    \caption{\cleaner}\label{fig:screenshot:cleaner}
  \end{subfigure}
  
  \caption{Screenshots of the \carflag\ and \cleaner\ environments.}
\end{figure}

\subsubsection{Car-Flag}

The implementation of \carflag\ can be found in~\cite{nguyen2021penvs}.
The agent uses force-control on a car moving along a 1-dimensional axis.  On
opposite extremes are a \emph{good} and a \emph{bad} flags which, if reached,
respectively end the episode positive and negatively.  Along the line, a third
\emph{info} flag appears, which allows the agent to observe the position of the
\emph{good} flag.  \Cref{fig:screenshot:carflag} depicts \carflag.  This setup
is similar to Heaven-Hell, although with force-control and the position of the
\emph{info} flag being significant differences.

\paragraph{States and Observations}

States and observations are both three-dimensional vectors.  In the states
features, the first two dimensions respectively contain the agent position and
velocity, while the third dimension contains the position of the \emph{good}
flag.  The observations are analogous, with the difference that the third
dimension is masked out (set to zero) when the agent is not within range of the
\emph{info} flag.

\paragraph{Actions}

Each time-step, the agent must choose how to accelerate the car using one of
$7$ possible actions representing \{ \textbf{LEFT\_HIGH},
  \textbf{LEFT\_MEDIUM}, \textbf{LEFT\_LOW}, \textbf{NONE},
  \textbf{RIGHT\_HIGH}, \textbf{RIGHT\_MEDIUM}, \textbf{RIGHT\_LOW} \}.

\paragraph{Rewards}

The agent receives a sparse reward signal composed of:
\begin{itemize}
  \item a reward of $1.0$ for reaching the \emph{good} flag; and
  \item a reward of $-1.0$ for reaching the \emph{bad} flag.
\end{itemize}

\subsubsection{Cleaner}

\cleaner~\cite{jiang_multi-agent_2021} is originally a 2-agent environment;
however, for the purpose of our evaluation, we frame it as a single-agent
control problem via fully centralized training and execution.  As a result, the
problem's actions and observations are obstained via Cartesian product of the
respective actions and observations of each separate agent.  In \cleaner, two
robots must cover the entire area of a $13\times 13$ maze-like environment in
order to clean it.  The task is complete when every cell in the grid has been
visited by at least one agent.  The environment is depicted in
\Cref{fig:screenshot:cleaner}.

\paragraph{States and Observations}

The state is provided as a $13\times 13\times 5$ binary tensor, indicating, for
each position in the grid, whether it contains a wall, a dirty cell, a clean
cell, the first agent, or the second agent.  Each agent's observation is given
as a $3\times 3\times 3$ binary tensor encoding the $3\times 3$ area
surrounding the agent.

\paragraph{Actions}

Each agent can move in one of the four directions using actions \{
\textbf{LEFT}, \textbf{RIGHT}, \textbf{UP}, \textbf{DOWN} \}.  In the
centralized control version of this problem, this results in $16$ possible
actions.

\paragraph{Rewards}

In each time-step, a unit reward is given for each new tile cleaned, resulting
in three possible rewards:  $0.0$, $1.0$, and $2.0$.  

\subsection{Gridverse POMDPs}\label{sec:environments:gridverse}

\begin{figure}
  \centering
  \begin{subfigure}{.25\linewidth}
    \centering
    \fbox{\includegraphics[width=\linewidth]{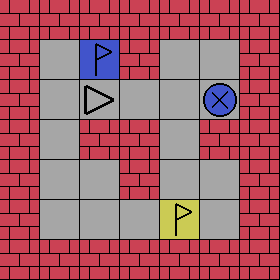}}
    \caption{State.}\label{fig:screenshot:memoryroomsseven:state}
  \end{subfigure}
  \qquad\qquad
  \begin{subfigure}{.25\linewidth}
    \centering
    \fbox{\includegraphics[width=\linewidth]{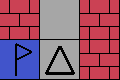}}
    \caption{Observation.}\label{fig:screenshot:memoryroomsseven:observation}
  \end{subfigure}
  \caption{\memoryroomsseven. Note that the states and observations are not
  provided as images to the agent;  these are visualizations for human
understanding.}\label{fig:screenshot:memoryroomsseven}
\end{figure}

\begin{figure}
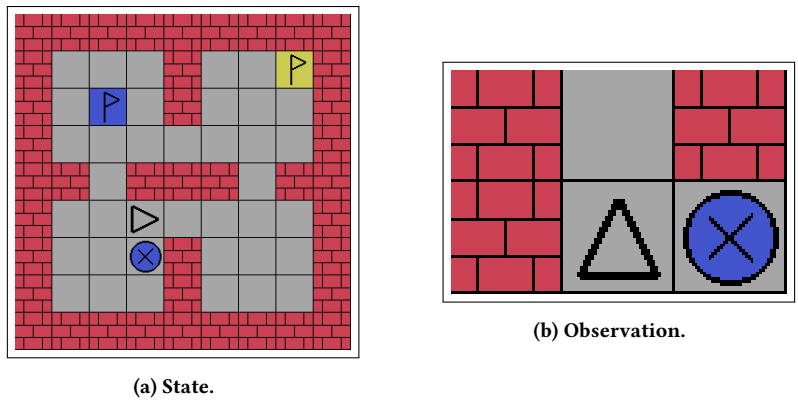

  \centering
  \begin{subfigure}{.25\linewidth}
    \centering
    \fbox{\includegraphics[width=\linewidth]{{figures/env.memory_four_rooms_9x9.state}.png}}
    \caption{State.}\label{fig:screenshot:memoryroomsnine:state}
  \end{subfigure}
  \qquad\qquad
  \begin{subfigure}{.25\linewidth}
    \centering
    \fbox{\includegraphics[width=\linewidth]{{figures/env.memory_four_rooms_9x9.observation}.png}}
    \caption{Observation.}\label{fig:screenshot:memoryroomsnine:observation}
  \end{subfigure}
  \caption{\memoryroomsnine. Note that the states and observations are not
  provided as images to the agent;  these are visualizations for human
understanding.}\label{fig:screenshot:memoryroomsnine}
\end{figure}

The implementation of the gridverse POMDPS can be found
in~\cite{baisero2021gym-gridverse}.
In the \emph{gridverse} POMDPs, actions are still encoded by categorical
indices, while states and observations are encoded as structures which do have
an inherent similarity metric; they are split into different components, some
of which are only available in the state, or which take a different form in the
observation:
\begin{itemize}
  \item A \emph{grid} component: a $3\times H\times W$ volume of categorical
    indices which encode cell type, cell color, and cell status.
    The observation grid component is a slice of the corresponding state grid
    component made to match the agent's perspective:  it is rotated to be a
    first-person view, as shown in
    \Cref{fig:screenshot:memoryroomsseven:observation,fig:screenshot:memoryroomsnine:observation},
    and cells hidden behind walls, if within the observation slice, are
    occluded.
  \item An \emph{agent\_id\_grid} component: a $H\times W$ binary matrix which
    encode the agent's position.  This is only available in the state.
  \item An \emph{agent} component: a $3$-dimensional array of categorical
    indices representing the agent position and orientation.
    The position and orientations in the state agent component are absolute,
    while those in the observation component are relative to the agent's
    perspective---they are essentially constant, and not necessary for control.
  \item An \emph{item} component:  a $3$-dimensional array of categorical
    indices representing the item held by the agent, if any.  While some
    \emph{gridverse} tasks do involve manipulation of items, the ones used in
    our evaluation do not, and this component could potentially be ignored.
\end{itemize}

\subsubsection{Memory-Four-Rooms}

The agent navigates a $7\times 7$ or $9\times 9$ gridworld split into four
rooms;  randomly positioned are a \emph{good} exit, a \emph{bad} exit, and a
\emph{beacon} with the same color as the \emph{good} exit.  To solve the task,
the agent must find the beacon, observe and remember its color, and use it to
identify and reach the \emph{good} exit which has the same color.  The
positions of the agent, the exits, and the beacon, as well as the colors of the
exits and beacons are randomly sampled such that each episode is unique.
\Cref{fig:screenshot:memoryroomsseven,fig:screenshot:memoryroomsnine} shows
state and observation frames respectively taken from instances of
\memoryroomsseven\ and \memoryroomsnine.

\paragraph{States and Observations}
For \memoryroomsseven, the state \emph{grid} component is a $3\times 7\times 7$
volume, while the observation \emph{grid} component is a $3\times 2\times 3$
volume representing a $2\times 3$ view of the agent surroundings.  For
\memoryroomsnine, the state \emph{grid} component is a $3\times 9\times 9$
volume.

\paragraph{Actions}
Each time-step, the agent must choose an action from the set \{
\textbf{MOVE\_FORWARD}, \textbf{MOVE\_BACKWARD}, \textbf{MOVE\_LEFT},
\textbf{MOVE\_RIGHT}, \textbf{TURN\_LEFT}, \textbf{TURN\_RIGHT},
\textbf{PICK\_N\_DROP}, \textbf{ACTUATE} \}.  The \textbf{MOVE\_*} actions
result in a movement depending on the agent's orientation, while the
\textbf{TURN\_*} allows the agent to change its orientation.  With the
\textbf{PICK\_N\_DROP} action, the agent can pick and/or drop the key from/to
the cell in front, while with the \textbf{ACTUATE} action, the agent can open
and/or close doors.  \memoryroomsseven\ and \memoryroomsnine have no doors or
pickable items, therefore the last two actions have no effect.

\paragraph{Rewards}
The agent receives a dense reward signal composed as the sum of the following
terms:
\begin{itemize}
  \item a living reward of $-0.05$ for every time-step;
  \item a reward of $5.0$ for reaching the \emph{good} exit;
  \item a reward of $-5.0$ for reaching the \emph{bad} exit.
\end{itemize}

\section{Model Architectures}\label{sec:architectures}

\begin{figure*}
  \centering
  \begin{subfigure}{.3\linewidth}
    \centering
    \includegraphics[width=\linewidth]{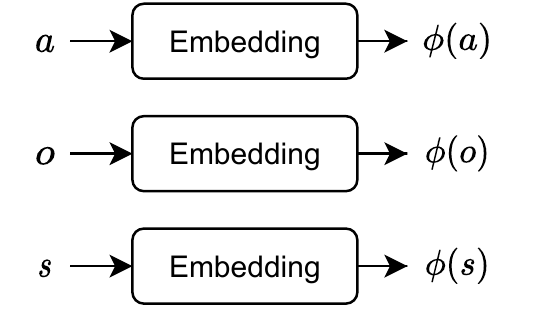}
    \caption{State, action, and observation representation models used for
    \emph{flat} POMDPs.}\label{fig:architecture:representations:flat}
  \end{subfigure}
  \qquad\qquad
  \begin{subfigure}{.5\linewidth}
    \centering
    \includegraphics[width=\linewidth]{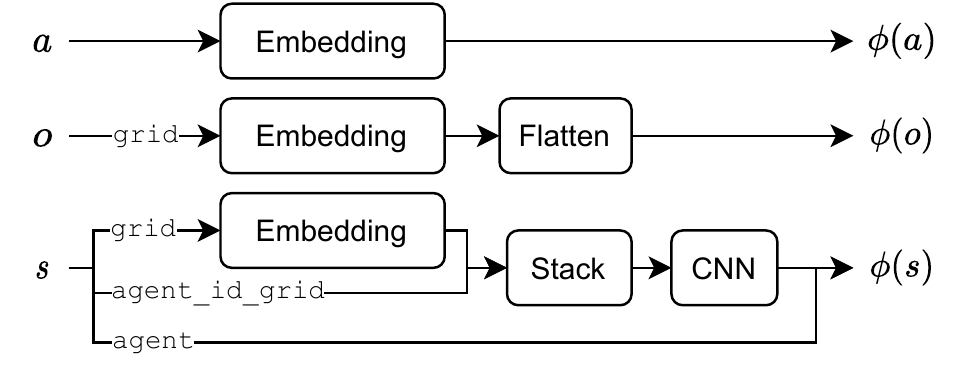}
    \caption{State, action, and observation representation models used for
    \emph{gridverse} POMDPs.}\label{fig:architecture:representations:gridverse}
  \end{subfigure}

  \begin{subfigure}{.8\linewidth}
    \centering
    \includegraphics[width=\linewidth]{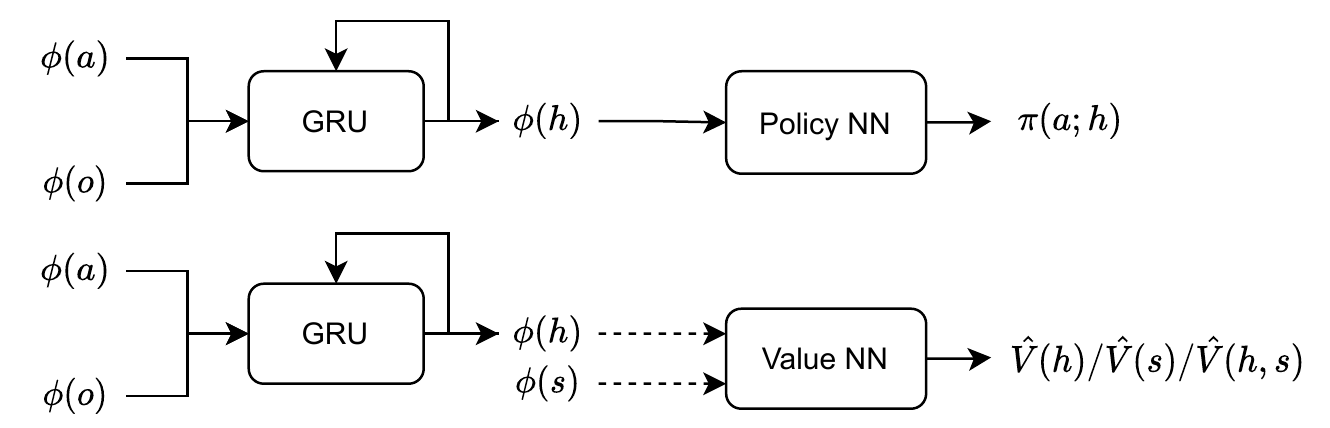}
    \caption{A2C architecture.  Separate components are used for policy and
      critic models.  For the \emph{categorical} and \emph{gridverse}
      environments, state, action, and observation representations $\phi(s)$,
      $\phi(a)$, and $\phi(o)$ are shown in
      \Cref{fig:architecture:representations:flat} and
      \Cref{fig:architecture:representations:gridverse}.  For the
      \emph{feature-vector} environments, states and observation
      representations $\phi(s)$ and $\phi(o)$ are directly provided by the
      environment directly, while action representations $\phi(a)$ are
    implemented as embeddings.  Dotted lines are present or omitted depending
  on whether a history critic $\vmodel(h)$, state critic $\vmodel(s)$, or
history-state critic $\vmodel(h, s)$ is being
modeled.}\label{fig:architecture:a2c}
  \end{subfigure}
  \caption{}\label{fig:architecture}
\end{figure*}

In this section, we describe the architectures used by the policy and critic
models for each environment, also shown in \Cref{fig:architecture}.  The
general architecture will be similar for all domains;  however there will some
differences to accommodate the different state and observation representations
provided by each environment.

\paragraph{General Architecture}

These components are shown in \Cref{fig:architecture:a2c}.  Policy and critic
models share part of the architecture, but not the associated parameters.
Concatenated action and observation features form the input to a
$128$-dimensional single-layer \emph{gated recurrent unit}
(GRU)~\cite{cho_properties_2014}, which acts as a history representation
$\phi(h)$.  The policy and value NN components vary in each environment. 

\paragraph{Categorical POMDPs}

The action, state, and observation feature components are shown in
\Cref{fig:architecture:representations:flat}.  Because categorical POMDPs
provide states, actions and observations as categorical indices, we use
$64$-dimensional embedding models to represent each of them.  
The policy and value NN components are each $2$-layer feedforward models with
$512$ and $256$ nodes with ReLU non-linearities.

\paragraph{Feature-Vector POMDPs}

The action featues are one-hot encodings of the respective categorical indices.
Because the states and observations provided by the environments already come
in a simple and flat feature representation, we do not process them further.
The policy and value NN components are each $2$-layer feedforward models with
$512$ and $256$ nodes with ReLU non-linearities.

\paragraph{Gridverse POMDPs}

These feature extraction components are shown in
\Cref{fig:architecture:representations:gridverse}.  Because gridverse POMDPs
provide actions as categorical indices, we use $1$-dimensional embedding models
to represent them, i.e., we focus on the state and observation representations
as they contain the most relevant information.  On the other hand, states and
observations are provided in the format described in
\Cref{sec:environments:gridverse}.
The $3\times 2\times 3$ observations are first processed using an
$8$-dimensional embedding layer, and then flattened, which produces a
$144$-dimensional observation feature $\phi(o)$.
The states contain relevant information in different forms, and require a more
complex model.  The \texttt{grid} component is processed using an embedding
layer, which is then stacked with the \texttt{agent\_id\_grid} component, and
processed by a $3$-layer convolutional network.  The output of the
convolutional layers is concatenated with the \texttt{agent} components, to
form the overall state feature $\phi(s)$.
The policy and value NN components are each single-layer feedforward models
with $512$ nodes with ReLU non-linearities.

\section{Hyperparameters and Grid Search}\label{sec:hpsearch}

For each environment and method, we perform a separate hyper-parameter grid
search to find the respective best training performance possible.  In each
case, the grid search is performed over the following hyper-parameters and
ranges of values:
\begin{enumerate}
  \item the actor learning rate $\alpha_{\policy}$, searched over values 0.0001,
    0.0003, and 0.001,
  \item the critic learning rate $\alpha_{\vmodel}$, searched over values 0.0001,
    0.0003, and 0.001,
  \item the initial negative-entropy weight $\lambda_0$, searched over
    environment-dependent values:
    \begin{description}
      \item[\heavenhellthree] 0.01, 0.03, 0.1, 0.3, 1.0.
      \item[\heavenhellfour] 0.01, 0.03, 0.1, 0.3, 1.0.
      \item[\shoppingfive] 0.3, 1.0, 3.0, 10.0, 30.0.
      \item[\shoppingsix] 0.3, 1.0, 3.0, 10.0, 30.0.
      \item[\carflag] 0.03, 0.1, 0.3, 1.0, 3.0.
      \item[\cleaner] 0.03, 0.1, 0.3, 1.0, 3.0.
      \item[\memoryroomsseven] 0.01, 0.03, 0.1, 0.3, 1.0.
      \item[\memoryroomsnine] 0.01, 0.03, 0.1, 0.3, 1.0.
    \end{description}
\end{enumerate}

In total, a full grid search over these hyper-parameters amounts to $3\cdot
3\cdot 5 = 45$ different hyper-parameter possibilities for each environment and
method.  Factoring in the $20$ independent runs, the $5$ methods, and $8$
environments, this adds up to $45\cdot 20\cdot 5\cdot 8 = 36$k separate runs.
The optimal hyperparameters for each case are shown in \Cref{tab:hparams}.
Other relevant hyper-parameters are set as follows:
\begin{itemize}
  \item The negative-entropy weight decays linearly over the course of $2$M
    timesteps to a final value equal one tenth of the initial one,
    $\frac{\lambda_0}{10}$.
  \item The number of episodes sampled per gradient step ($E$ in
    \Cref{alg:code}) is set to $2$.
  \item Episodes are automatically terminated if they do not end after $100$
    timesteps.
  \item A frozen target model is used to stabilize the training of critics,
    with the target model parameters being updated every $10$k timesteps.
\end{itemize}

\begin{table}
  \centering
  \caption{Hyperparameter grid search results.}\label{tab:hparams}
  \begin{tabular}{lllll}
    \toprule
    Domain & Method & $\alpha_{\policy}$ & $\alpha_{\vmodel}$ & $\lambda_0$ \\
    \midrule
    \heavenhellthree & \achs & 0.001 & 0.001 & 0.1 \\
    & \acs & 0.001 & 0.001 & 1.0 \\
    & \ach & 0.001 & 0.001 & 0.1 \\
    & \achrtwo & 0.001 & 0.0003 & 1.0 \\
    & \achrfour & 0.001 & 0.0003 & 1.0 \\
    \midrule
    \heavenhellfour & \achs & 0.001 & 0.001 & 0.1 \\
    & \acs & 0.001 & 0.001 & 0.1 \\
    & \ach & 0.001 & 0.0003 & 0.3 \\
    & \achrtwo & 0.001 & 0.0003 & 0.3 \\
    & \achrfour & 0.001 & 0.0003 & 0.3 \\
    \midrule
    \shoppingfive & \achs & 0.001 & 0.0003 & 3.0 \\
    & \acs & 0.001 & 0.001 & 10.0 \\
    & \ach & 0.001 & 0.0003 & 3.0 \\
    & \achrtwo & 0.001 & 0.001 & 3.0 \\
    & \achrfour & 0.001 & 0.001 & 3.0 \\
    \midrule
    \shoppingsix & \achs & 0.001 & 0.0003 & 3.0 \\
    & \acs & 0.001 & 0.001 & 10.0 \\
    & \ach & 0.001 & 0.0003 & 3.0 \\
    & \achrtwo & 0.001 & 0.001 & 1.0 \\
    & \achrfour & 0.001 & 0.0003 & 10.0 \\
    \midrule
    \carflag & \achs & 0.001 & 0.001 & 0.03 \\
    & \acs & 0.001 & 0.001 & 0.03 \\
    & \ach & 0.001 & 0.001 & 0.03 \\
    & \achrtwo & 0.001 & 0.001 & 0.03 \\
    & \achrfour & 0.001 & 0.001 & 0.03 \\
    \midrule
    \cleaner & \achs & 0.001 & 0.001 & 1.0 \\
    & \acs & 0.001 & 0.001 & 1.0 \\
    & \ach & 0.001 & 0.001 & 1.0 \\
    & \achrtwo & 0.001 & 0.001 & 3.0 \\
    & \achrfour & 0.001 & 0.001 & 1.0 \\
    \midrule
    \memoryroomsseven & \achs & 0.0003 & 0.001 & 0.1 \\
    & \acs & 0.0003 & 0.001 & 0.01 \\
    & \ach & 0.0003 & 0.0001 & 0.1 \\
    & \achrtwo & 0.001 & 0.001 & 0.3 \\
    & \achrfour & 0.001 & 0.001 & 0.3 \\
    \midrule
    \memoryroomsnine & \achs & 0.001 & 0.0003 & 0.3 \\
    & \acs & 0.001 & 0.0003 & 0.1 \\
    & \ach & 0.001 & 0.0003 & 0.3 \\
    & \achrtwo & 0.001 & 0.001 & 0.1 \\
    & \achrfour & 0.001 & 0.001 & 0.1 \\
    \bottomrule
  \end{tabular}
\end{table}


\end{document}